\documentclass[letterpaper, 10 pt, conference]{ieeeconf}  
\let\labelindent\relax
\usepackage{enumitem}
\IEEEoverridecommandlockouts                              
\usepackage{subcaption}
\usepackage{etoolbox}
\makeatletter
\patchcmd{\@thm}{\trivlist}{\trivlist\itemsep=0pt \parsep=0pt}{}{}
\makeatother
\usepackage{algorithm,algpseudocode}
\renewcommand{\baselinestretch}{0.95}
\usepackage[utf8]{inputenc} 
\usepackage[T1]{fontenc}    
\usepackage{hyperref}       
\usepackage{booktabs}       
\usepackage{amsfonts}       
\usepackage{commath}
\usepackage{breqn}
\usepackage{nicefrac}       
\usepackage{microtype}      
\usepackage{xcolor}         
\usepackage{enumitem}
\usepackage{amsmath}
\usepackage{lipsum}
\usepackage{amssymb}
\usepackage{mathrsfs,bm}
\usepackage[font={small}]{caption}
\usepackage{subcaption}
\usepackage{multirow}
\usepackage{bbm}
\usepackage{arydshln}
\usepackage{etoolbox}

\makeatletter
\patchcmd{\proof}{\topsep6\p@\@plus6\p@\relax}{\topsep1pt\relax}{}{}
\makeatother
\usepackage{mathtools}
\usepackage{enumitem}
\usepackage{pict2e,picture} 
\usepackage{tikz}   
\usepackage{pifont} 
\usepackage{enumitem}
\usepackage{wrapfig}
\usepackage{bbm}
\usepackage{graphicx}
\usepackage{amsmath}
\usepackage[normalem]{ulem}
\newtheorem{lemma}{Lemma}
\newtheorem{theorem}{Theorem}
\newtheorem{theorem*}{Theorem}

\newtheorem{prop}{Proposition}

\newtheorem{assumption}{Assumption}

\usepackage{cite}
\title{\LARGE \bf
Optimistic Policy Learning under Pessimistic Adversaries with Regret and Violation Guarantees
}

\author{
Sourav Ganguly, Kartik Pandit and 
Arnob Ghosh\\Dept. of Electrical and Computer Engineering, NJIT, Newark, NJ, USA\\
\thanks{Emails: \{sg2786,ksp82,arnob.ghosh\}@njit.edu 
}
}

\begin{document}

\maketitle
\thispagestyle{empty}
\pagestyle{empty}
\begin{abstract}
Real-world decision-making systems operate in environments where state transitions depend not only on the agent's actions, but also on \textbf{exogenous factors outside its control}--competing agents, environmental disturbances, or strategic adversaries--formally, $s_{h+1} = f(s_h, a_h, \bar{a}_h)+\omega_h$ where $\bar{a}_h$ is the adversary/external action, $a_h$ is the agent's action, and $\omega_h$ is an additive noise. Ignoring such factors can yield policies that are optimal in isolation but \textbf{fail catastrophically in deployment}, particularly when safety constraints must be satisfied.
 
Standard Constrained MDP formulations assume the agent is the sole driver of state evolution, an assumption that breaks down in safety-critical settings. Existing robust RL approaches address this via distributional robustness over transition kernels, but do not explicitly model the \textbf{strategic interaction} between agent and exogenous factor, and rely on strong assumptions about divergence from a known nominal model.
 
We model the exogenous factor as an \textbf{adversarial policy} $\bar{\pi}$ that co-determines state transitions, and ask how an agent can remain both optimal and safe against such an adversary. \emph{To the best of our knowledge, this is the first work to study safety-constrained RL under explicit adversarial dynamics}. We propose \textbf{Robust Hallucinated Constrained Upper-Confidence RL} (\texttt{RHC-UCRL}), a model-based algorithm that maintains optimism over both agent and adversary policies, explicitly separating epistemic from aleatoric uncertainty. \texttt{RHC-UCRL} achieves sub-linear regret and constraint violation guarantees.
\end{abstract}

\vspace{-0.08in}
\section{Introduction}
\vspace{-0.2em}
Reinforcement Learning (RL) has been successfully applied across a wide range of domains,
including robotics~\cite{tang2025deep}, healthcare~\cite{stasolla2025combined}, large 
language models~\cite{pandit2025certifiable}, and game playing~\cite{dobrovsky2018improving}.
However, many real-world systems are inherently safety-critical and operate in environments
whose state evolution is driven not only by the agent's own actions, but also by 
\emph{exogenous factors outside its control}---competing agents, uncooperative humans, or 
adversarial disturbances. Formally, $s_{h+1} = f(s_h, a_h, \bar{a}_h)+\omega_h$, where $\bar{a}_h$ 
represents the action of an external agent that the protagonist cannot govern. Ignoring 
such factors and treating the environment as fully agent-controlled is not merely a 
modeling simplification---it is a source of systematic safety failures, since the 
worst-case behavior of external actors can violate constraints that appear satisfiable 
under nominal conditions. This motivates the need for a framework that is both 
\emph{safe} and \emph{robust to adversarial exogenous influences}.

Safety in RL is commonly addressed via Constrained Markov Decision Processes 
(CMDPs)~\cite{padakandla2022data}, and robustness via Robust CMDPs (RCMDPs) \cite{kitamura2024near}, which 
require constraints to hold under model mismatches. Most existing RCMDP approaches adopt 
\emph{distributional robustness}, modeling uncertainty through ambiguity sets around a 
nominal model. While effective in certain regimes, these methods rely on strong 
assumptions---access to a nominal model, and the true environment lying within a bounded 
$f$-divergence neighborhood---and primarily address \emph{passive} uncertainty sources 
such as sim-to-real gaps~\cite{ganguly2025efficient}. Crucially, they fail to capture 
\emph{adaptive, policy-dependent} disturbances: the external agent's behavior may shift 
strategically in response to the protagonist's policy, making fixed ambiguity sets 
fundamentally inadequate.

To illustrate, consider an autonomous vehicle merging into a busy lane. Surrounding 
drivers do not inject random noise into the system---they react strategically. An 
aggressive merge may provoke a driver to block; hesitation may prompt others to 
accelerate and deny entry. Even rare but critical reactions, such as a driver refusing 
to yield, can cause safety violations. Such interactions cannot be captured by a 
distributional ambiguity set; they require modeling the external agent as an active 
adversary whose policy co-determines state transitions.

We adopt precisely this perspective. We model exogenous disturbances as arising from an 
explicit \emph{adversarial policy} $\bar{\pi}$ that co-determines state evolution, and 
ask: \emph{how can an agent remain both optimal and safe when external actors behave 
adversarially?} To the best of our knowledge, this is the \emph{first} work to study 
safety-constrained RL under such an adversarial framework, where constraint satisfaction 
must hold against worst-case, policy-dependent perturbations. Formally:
\begin{equation}
    \begin{aligned}
        &\max_{\pi \in \Pi} \min_{\bar{\pi} \in \bar{\Pi}}\; J_{r}(f,\pi,\bar{\pi}),\textbf{s.t.}\quad \min_{\bar{\pi} \in \bar{\Pi}}\;J_{u}(s,\pi,\bar{\pi}) \geq b.
    \end{aligned}
    \label{eqn:opt}
\end{equation}
The protagonist $\pi$ maximizes expected return while ensuring safety under all 
adversarial disturbances; the antagonist $\bar{\pi}$ actively perturbs the environment 
to degrade both. This yields a MARL framework~\cite{zhang2021multi} in which robustness 
and safety are jointly enforced. $J_{g}(f,\pi,\bar{\pi})$ for $g=\{r,u\}$ is formally defined in section \ref{sec:pf}

To solve~\eqref{eqn:opt}, we introduce \texttt{RHC-UCRL} (Robust Hallucinated 
Constrained Upper-Confidence Reinforcement Learning), a model-based algorithm inspired 
by the unconstrained \texttt{RH-UCRL}~\cite{curi2021combining}. Like its predecessor, 
\texttt{RHC-UCRL} separates \emph{epistemic} uncertainty (limited data) from 
\emph{aleatoric} uncertainty (inherent stochasticity), and employs \emph{hallucination}\cite{curi2020efficient}. Informally, hallucination refers to constructing plausible but unobserved transitions that reflect uncertainty in the dynamics, thereby enabling the agent to reason about and guard against adverse outcomes. The magnitude of these hallucinated perturbations is governed by the model's epistemic uncertainty and decreases as the algorithm accumulates more data.

The constrained setting, however, brings challenges absent in the unconstrained case. First, the adversary can now focus its perturbations specifically on inducing constraint violations, rather than merely degrading reward. Second, and more subtly, the worst-case adversarial action for the reward objective and the constraint objective need not coincide---meaning the standard primal-dual approach, which relies on a single adversary solving a scalarized objective, may fail to handle~\eqref{eqn:opt}. To address this, we adopt a \emph{rectified penalty} approach: constraint violations are penalized heavily, while feasible solutions incur no penalty. This surrogate formulation decouples the reward and constraint problems in a principled way, circumventing the breakdown of strong duality.

\textbf{Main Contributions}.
\begin{enumerate}[leftmargin=*]
    \item We propose \texttt{RHC-UCRL}, the first \emph{provably robust} constrained RL 
    algorithm that is (i) computationally efficient, (ii) compatible with deep function 
    approximation, and (iii) deployable in safety-critical settings. A key technical 
    contribution is a rectified penalty formulation that correctly handles the 
    misalignment between reward- and constraint-adversaries, where standard primal-dual 
    methods fail.

    \item We establish that \texttt{RHC-UCRL} achieves sub-linear regret 
    and sub-linear constraint violation---the first such guarantees for 
    constrained RL under adversarial dynamics.
    \item Empirically, \texttt{RHC-UCRL} achieves good reward and maintains feasibility for almost complete duration, unlike RH-UCRL \cite{curi2021combining}.
\end{enumerate}
\vspace{-0.5em}
\subsection{Related Literature}
\label{sec:rel-lit}

\textbf{CMDP}: Primal-dual based approaches with provable performance guarantee have been proposed \cite{ghosh2022provably,ding2023last,ghosh2024towards} to study non-robust CMDP with $O(1/\epsilon^2)$ iteration and sample complexity guarantee. The approaches used the strong duality result and the dynamic programming method which are not possible to apply to the robust CMDP directly

\textbf{Robust Unconstrained MDP}: Robust MDPs have been studied under both known uncertainty sets \cite{iyengar2005robust,wang2021online} and unknown uncertainty sets \cite{shi2024curious,panaganti2022robust,xu2023improved}. Among these works, only \cite{wang2023policy} provides iteration complexity guarantees for robust policy optimization.

A separate line of research has explored adversarial robustness in unconstrained MDPs \cite{curi2021combining,vinitsky2020robust}, where an explicit adversarial agent is introduced to perturb the environment and hinder the learning process of the agent which constitutes a stronger form of robustness.

\textbf{Robust CMDP:} Robust CMDPs (RCMDPs) extend CMDPs to settings with uncertain transition dynamics. Early approaches relied on primal-dual and Lagrangian methods \cite{russel2020robust,mankowitz2020robust,wang2022robust}. However, \cite{wang2022robust} showed that strong duality generally fails in RCMDPs, as the worst-case transition dynamics depend on the policy, rendering standard CMDP techniques ineffective in the robust setting. To address this challenge, \cite{kitamura2024near} proposed an epigraph-based approach for near-optimal policy identification, though it incurs significant computational overhead. Subsequent work improved efficiency by avoiding binary search and establishing stronger iteration guarantees \cite{ganguly2025efficient}. Other studies demonstrated that strong duality can be recovered under restricted policy classes \cite{ganguly2025iteration}. 

Despite these advances, all existing works focus on distributional robustness. In contrast, policy-dependent adversarial settings remain largely unexplored. 
\vspace{-0.05in}
\section{Problem Formulation}
\label{sec:pf}
We consider a stochastic environment with state space $\mathcal{S} \subset \mathbb{R}^{p}$, action space $\mathcal{A}\subset \mathbb{R}^{q}$, adversary action space $\bar{\mathcal{A}} \in \mathbb{R}^{t}$ and the $i.i.d$ additive noise vector $\boldsymbol{\omega}_{h} \in \mathbb{R}^{p}$. We consider $\mathcal{A}$ and $\mathcal{\bar{A}}$ to be compact and state transition dynamics given by:
\vspace{-0.7em}
\begin{align}
     \mathbf{s_{h+1}} = f(\mathbf{s_{h}},\mathbf{a_{h}},\mathbf{\bar{a}_{h}}) +\boldsymbol{\omega_{h}},
    \label{eqn:transition}
 \end{align}
with $f:\mathcal{S}\times \mathcal{A} \times \mathcal{\bar{A}} \to \mathcal{S}$. We assume the true dynamics $f$ is \emph{unknown} and consider the episodic setting over a finite time horizon $H$. After every episode, the system is reset to a fixed state $s_{0}$. In this work the initial state $s_{0}$ is deterministic and fixed. For this work we make the following assumptions (assumption \ref{assume1}) regarding the stochastic environment and unknown dynamics. Note this type of assumption is very standard and has been used in many other works modelling adversaries to be the result of actions taken by an adversarial agent \cite{curi2021combining,curi2020efficient}.

\begin{assumption}
    \label{assume1}
    The function f that determines the dynamics in equation \eqref{eqn:transition} is $L_{f}$-Lipschitz continuous and the noise $\omega_{h}~\forall ~h \in \{0,1,\ldots,H-1\}$ are $i.i.d.$ $\sigma$-Sub-Gaussian.
\end{assumption}

 At each step, the system returns a deterministic reward $r(\mathbf{s}_h, \mathbf{a}_h, \mathbf{\bar{a}}_h)$ and a deterministic utility $u(\mathbf{s}_h, \mathbf{a}_h, \mathbf{\bar{a}}_h)$, where $r:\mathcal{S} \times \mathcal{A} \times \bar{\mathcal{A}} \to [0,1]$ and $u:\mathcal{S} \times \mathcal{A} \times \bar{\mathcal{A}} \to [0,1]$ are known to the agent.

We consider time-homogeneous policies for both agents. The protagonist policy $\pi \in \Pi$ is defined as $\pi:\mathcal{S} \to \mathcal{A}$, selecting actions according to $\mathbf{a}_h = \pi(\mathbf{s}_h)$. Similarly, the adversarial policy $\bar{\pi} \in \bar{\Pi}$ is defined over the same state space as $\bar{\pi}:\mathcal{S} \to \bar{\mathcal{A}}$, selecting actions as $\mathbf{\bar{a}}_h = \bar{\pi}(\mathbf{s}_h)$.

For simplicity, we omit standard extensions such as initial state distributions and time-dependent policies, which can be incorporated using well-established techniques (e.g., \cite{chowdhury2017kernelized}).

The performance of a pair of policies $(\pi,\bar{\pi})$ on a given dynamical system $\tilde{f}$ is the episodic expected sum of returns and expected sum of utility:
\vspace{-0.2em}
\begin{equation}
    \label{eqn:reward_fn}
    \begin{aligned}
        &J_{r}(\tilde{f},\pi,\bar{\pi}) := \mathbb{E}_{\tau_{\tilde{f},\pi,\bar{\pi}}}\left[\sum_{h=0}^{H} r(\mathbf{s}_{h},\mathbf{a}_{h},\mathbf{\bar{a}}_{h})|\mathbf{s}_{0}\right], 
    \end{aligned}
\end{equation}
\begin{equation}
\label{eqn:utility_fn}
    \begin{aligned}
        &J_{u}(\tilde{f},\pi,\bar{\pi}) :=\mathbb{E}_{\tau_{\tilde{f},\pi,\bar{\pi}}}\left[\sum_{h=0}^{H-1} u(\mathbf{s}_{h},\mathbf{a}_{h},\mathbf{\bar{a}}_{h})|\mathbf{s}_{0}\right],
    \end{aligned}
\end{equation}
such that the state transition is given by $\mathbf{s_{h+1}}=\tilde{f}(\mathbf{s}_{h},\mathbf{a}_{h},\bar{\mathbf{a}}_{h}) + \omega_{h}$ and $\tau_{\tilde{f},\pi,\bar{\pi}} = \{(\mathbf{s}_{h-1},\mathbf{a}_{h-1},\mathbf{\bar{a}}_{h-1}),\mathbf{s}_{h}\}_{h=1}^{H}$ is a random trajectory induced by $\boldsymbol{\omega},\tilde{f}$ and $(\pi,\bar{\pi})$.

To incorporate robustness and safety of the system, we need to solve the following problem as given in equation \eqref{eqn:opt}

One way to do is to consider the Lagrangian, and solve the following
\begin{equation}
\label{eqn:obj}
    \begin{aligned}
        \pi^{*} \in \arg \min_{\lambda \geq 0} \max_{\pi \in \Pi} \min_{\bar{\pi} \in \bar{\Pi}} J_{r}(f,\pi,\bar{\pi}) + \lambda(\min_{\bar{\pi} \in \bar{\Pi}}J_{u}(f,\pi,\bar{\pi})-b).
    \end{aligned}
\end{equation}
\vspace{-0.2em}
\textbf{Difficulty of the Primal-Dual Approach}: In the standard CMDP setting, strong 
duality holds under Slater's condition, and the constrained problem reduces to an 
unconstrained one via scalarization: it suffices to optimize the combined reward 
$r + \lambda u$ for an appropriate multiplier $\lambda$. Neither reduction is available 
here. The core obstruction is that the worst-case adversarial policy $\bar{\pi}$ need 
not be the same for the reward objective and the constraint objective. The  two $\min_{\bar{\pi}}$ operators in~\eqref{eqn:obj} cannot be merged into a 
single adversary. Consequently, the Lagrangian cannot be written as a single minimax 
problem, strong duality breaks down, and the scalarized objective $r + \lambda u$ yields 
a \emph{different} adversary than the one in the original problem---making the standard 
primal-dual approach fundamentally inapplicable.

\textbf{Rectified Penalty Framework}: In this work, we solve a different surrogate problem as shown in equation \eqref{eqn:surrogate} and show that the policy pair returned by this algorithm achieves sublinear regret and violation bounds.
\begin{equation}
\label{eqn:surrogate}
    \pi \in \arg \max_{\pi \in \Pi} \min_{\bar{\pi} \in \bar{\Pi}} \left[J_{r}(f,\pi,\bar{\pi}) - \lambda[ b - J_{u}(f,\pi,\bar{\pi})]_{+}\right],
\end{equation}
where $[x]_{+}:=\max(x,0)$ and $\lambda$ is fixed to a high value.

\paragraph{Learning Metric} In this work, the underlying setting is episodic. At each episode $t$, the learning algorithm selects both the agent's policy $\pi_{t}$ and the adversary's policy $\bar{\pi}_{t}$ using the current dynamical model(more details in section \ref{sec:rhc-ucrl}). The pair $(\pi_{t},\bar{\pi}_{t})$ is then played to realize a trajectory $\tau_{f,\pi_{t},\bar{\pi}_{t}}$. The complete algorithm is summarized in Algorithm \ref{algo:rhc_ucrl}. In the \emph{lane-merging} problem, the adversary can be thought of as varied traffic patterns, where in each case the agent learns to find the optimal safe policy.


\paragraph{Statistical Model} We consider a model based RL approach. That is, to learn the dynamics and find a near-optimal robust policy we consider algorithms that model and sequentially learn about $f$ from noisy state observations. We use statistical estimation to probabilistically reason about dynamic models $\tilde{f}$ that are compatible with the observed data $\mathcal{D}_{1:t} = \{ \nu_{f,\pi_{t^{'}},\bar{\pi}_{t^{'}}}\}_{t^{'}=1}^{t}$. This can be done by frequentist estimation of $\mu_{t}(\mathbf{s},\mathbf{a},\mathbf{\bar{a}})$ and confidence $\Sigma_{t}^{2}(\mathbf{s},\mathbf{a},\mathbf{\bar{a}})$ or the bayesian estimation and considering posterior distribution $p(\tilde{f}|\mathcal{D}_{1:t})$ over dynamical models (\cite{Srinivas_2012}). In any case, we need our model to be calibrated. 

\begin{assumption}
\label{assume2}
    The statistical model is calibrated w.r.t f in \eqref{eqn:transition}, so that with $\sigma_{t}(.) = \text{diag}(\Sigma_{t}(.))$ and a non-decreasing sequence of parameters $\{\beta_{t}\}_{t \geq 1} \in \mathbb{R}_{+}$, each depending on $\delta \in (0,1)$, it holds jointly for all $t \geq 1$ and $\mathbf{s,a,\bar{a}} \in \mathcal{S}\times \mathcal{A} \times \mathcal{\bar{A}}$ such that $|f(\mathbf{s,a,\bar{a}}) - \mu_{t-1}(\mathbf{s,a,\bar{a}})| \leq \beta_{t} \sigma_{t-1}(\mathbf{s,a,\bar{a}})$ elementwise with probability at least $1-\delta$
    \label{assump:cali}
\end{assumption}

The calibrated model assumption (Assumption \ref{assume2}) is very common in literature \cite{curi2021combining,Srinivas_2012,chowdhury2017kernelized}. This assumption is crucial for exploration: if the model is not well-calibrated, then leveraging its epistemic uncertainty does not provably guide exploration. For dynamics with bounded norm in a known RKHS, Assumption~2 is satisfied \cite{Srinivas_2012,chowdhury2017kernelized}. In the case of neural network models, one-step-ahead predictions can be recalibrated \cite{malik2019calibratedmodelbaseddeepreinforcement}. 

Finally, we construct the set of plausible models at time $t$ as $\mathcal{M}_{t} =\{\tilde{f} || \tilde{f}(.) - \mu_{t-1}(.)| \leq \beta_{t}\Sigma_{t-1}(.)\}$ . By Assumption 2, we can guarantee that, with high probability, the true dynamics $f \in \mathcal{M}_{t}$, for all time t.
\vspace{-0.2em}
\section{The Robust Constrained H-UCRL Algorithm}
In this section we will discuss our algorithm\footnote{The complete code and related files can be found in \url{https://github.com/Sourav1429/RHC_UCRL.git}} for selecting the policy pair $(\pi_{t},\bar{\pi}_{t})$ at any instant $t$.
\subsection{Optimistic and Pessimistic Policy Evaluation}
Let us denote the optimistic and pessimistic estimate for $J_{g}(f,\pi,\bar{\pi})~\text{s.t.} g\in \{r,u\}$ at any instant $t$ as $J_{g_{t}}^{(o)}(\pi,\bar{\pi})$ and $J_{g_{t}}^{(p)}(\pi,\bar{\pi})$ for $g \in \{r,u\}$ respectively. For constructing these estimates, we make use of the epistemic uncertainty of dynamical model. To find these estimates, we introduce auxiliary function $\eta:\mathcal{S} \times \mathcal{A} \times \mathcal{\bar{A}} \to [-1,1]^{p}$ amd reparameterize the set of plausible models as $\tilde{f} = \mu_{t-1}(.) + \beta_{t}\eta(.)\Sigma_{t-1}(.)$. Using this reparameterization, for $g=r,u$, the optimistic estimate is given by
\vspace{-0.6em}
\begin{equation}
    \begin{aligned}
    \label{eqn:opt_def}
        &J_{g_{t}}^{(o)}(\pi,\bar{\pi}) := \max_{\eta^{(o)}} J_{g}(f^{(o)},\pi,\bar{\pi})~\text{s.t. }g \in\{r,u\},\\
        &\text{ s.t. } f^{(o)}(\mathbf{s},\mathbf{a},\bar{\mathbf{a}}) = \mu_{t-1}(\mathbf{s},\mathbf{a},\mathbf{\bar{a}}) + \\&\beta_{t}\eta^{(o)}(\mathbf{s},\mathbf{a},\mathbf{\bar{a}})\Sigma_{t-1}^{1/2}(\mathbf{s},\mathbf{a},\mathbf{\bar{a}}).
    \end{aligned}
\end{equation}
Similarly , for $g=r,u$, the pessimistic estimate at time $t$  is given by
\begin{equation}
\label{eqn:pes_def}
    \begin{aligned}
        &J_{g_{t}}^{(p)}(\pi,\bar{\pi}) := \min_{\eta^{(p)}} J_{g}(f^{(p)},\pi,\bar{\pi})\\
        &\text{ s.t. } f^{(p)}(\mathbf{s},\mathbf{a},\bar{\mathbf{a}}) = \mu_{t-1}(\mathbf{s},\mathbf{a},\mathbf{\bar{a}}) + \\&\beta_{t}\eta^{(p)}(\mathbf{s},\mathbf{a},\mathbf{\bar{a}})\Sigma_{t-1}^{1/2}(\mathbf{s},\mathbf{a},\mathbf{\bar{a}}).
    \end{aligned}
\end{equation}

For simplicity we denote $J^{(x)}_{g_{t}} = J_{g_{t}}^{(x)}(\pi_{t},\bar{\pi})$ for $x=\{o,p\}$ and $g=\{r,u\}$. Note that the optimistic/pessimistic outcome is selected by the decision variables $\eta^{(o)}/\eta^{(p)}:\mathcal{S}\times \mathcal{A}\times \mathcal{\bar{A}} \to [-1,1]^{p}$. Both equations \eqref{eqn:opt_def} and \eqref{eqn:pes_def} are independent control problems where the decision variables $\eta^{(o)}/\eta^{(p)}$ are hallucinated control policies. 
\vspace{-0.3em}
\subsection{The RHC-UCRL Algorithm}
\label{sec:rhc-ucrl}
Given the definitions of optimistic and pessimistic estimates of $J(f,\pi,\bar{\pi})$, we are now ready to state our algorithm. At each episode $t$, \texttt{RHC-UCRL} selects agent and adversary policies as given by Algorithm \ref{algo:rhc_ucrl} (Line 9 and 10). 

Thus, \texttt{RHC-UCRL} selects the most optimal safe policy as the policy to be played by the agent, whereas the adversary player picks the most pessimistic policy. 
\vspace{-0.4em}
\begin{algorithm}
\caption{RHC-UCRL}
\label{algo:rhc_ucrl}
    \begin{algorithmic}[1]
    \vspace{-0.3em}
        \State \textbf{Input:}  $s_{0}$,  $r:\mathcal{S} \times \mathcal{A} \times \mathcal{\bar{A}} \to [0,1]$, $u:\mathcal{S} \times \mathcal{A} \times \mathcal{\bar{A}} \to [0,1]$, $H$, $\tau$,$\lambda$, $\pi_{0},\bar{\pi}_{0}$
        \For{$t=1,2,\ldots,\tau$}
        \State Reset the system to $s_{0,t+1} = s_{0}$
        \For{$h=1,\ldots,H$}
        \State $s_{(h,t)} = f(s_{(h-1,t)},\pi_{t}(s_{(h-1,t)}),\bar{\pi}_{t}(s_{(h-1,t)})) + \omega_{h,t}$
        \State Collect $\left(s_{h-1},a_{h-1},\bar{a}_{h-1},s_{h},r,u \right)_{t}$
        \EndFor
        \State Update the model dynamics to accurately express $f^{(o)}$ and $f^{(p)}$
        \State $\pi_{t} \in \arg \underset{\pi}{\max}~\underset{\bar{\pi}}{\min}\left(J_{r}^{(o)}-\lambda\left[b-J_{u}^{(o)}\right]_{+}\right)$
        \State $\bar{\pi}_{t} \in \arg \underset{\bar{\pi}}{\min} \left(J^{(p)}(\pi_{t},\bar{\pi})-\lambda\left[b-J_{u}^{(p)}(\pi_{t},\bar{\pi})\right]_{+}\right) $
        \EndFor
    \end{algorithmic}
    \vspace{-0.3em}
\end{algorithm}

\textit{Note that the optimization problems in Algorithm~\ref{algo:rhc_ucrl} (line (9) and (10)) are hard to solve directly. We describe how we solve them efficiently in the experimental Section ~\ref{sec:experiments}.}
\section{Theoretical Results}
In this section, we theoretically analyze the performance of  \texttt{RHC-UCRL} algorithm. First, we use the notion of robust cumulative regret (equation \eqref{eqn:cumu_reg}) and violation (equation \eqref{eqn:cumu_vio})\footnote{Similar notions were used in \cite{kirschner2020distributionally}} for a policy pair $(\pi_{t}, \bar{\pi_{t}})$ which measures the difference in performance from the optimal policy and the amount of violation the policy pairs impose respectively. 
\begin{equation}
    \label{eqn:cumu_reg}
    R_{T}  = \sum_{t=1}^{T} \min_{\bar{\pi} \in \bar{\Pi}} J(f,\pi^{*},\bar{\pi}) - \min_{\bar{\pi} \in \bar{\Pi}} J(f,\pi_{t},\bar{\pi}),
\end{equation}
\vspace{-0.18in}
\begin{equation}
\label{eqn:cumu_vio}
    V_{T} = \sum_{t=1}^{T} (b- J(f,\pi_{t},\bar{\pi}_{t}))_+
\end{equation}

Regret defines the sub-optimality gap over the number of episodes, whereas violation denotes the cumulative cancellation-free constraint violation. Note that our violation bound is stronger than the other violation metric typically considered in the CMDP literature \cite{ghosh2022provably,wei2021provably}, $\sum_{t=1}^{T} (b- J(f,\pi_{t},\bar{\pi}_{t}))$ where policies can violate for $O(T)$ number of episodes with zero violation (e.g., policies alternate between feasibility and infeasibility with the same margin). 

In theorem \ref{thm:1}, we establish that \texttt{RHC-UCRL} achieves sublinear regret, i.e., $R_{T}/T \to 0$ and sublinear violation i.e. $V_{T}/T \to 0$ as $t \to \infty$. Before stating our main theoretical results, we introduce some additional assumptions:
\begin{assumption}
\label{assume3}
    For every episode $t$, the functions $\Sigma_{t}$, the agent`s policy $\pi_{t} \in \Pi$, the adversary`s policy $\bar{\pi}_{t} \in \bar{\Pi}$, the reward function $r(.,.,.)$ and the utility function $u(.,.,.)$ are Lipschitz continuous with respective constants as $L_{\sigma},L_{\pi},L_{\bar{\pi}},L_{r}$ and $L_{u}$.
\end{assumption}
The previous assumption is mild and has been used in non-robust and unconstrained robust model-based RL, see, e.g. (see \cite{curi2020efficient,curi2021combining}). The robust regret, robust violation, and sample complexity rates that we analyze depend on the difficulty of learning the underlying statistical model. Models that are easy to learn typically require fewer samples and allow algorithms to make better decisions sooner. To express the difficulty of learning the imposed calibrated model class, we use the following model-based complexity measure:
\begin{equation}
    \Gamma_{T} = \max_{\tilde{\mathcal{D}}_{1:T}} \sum_{t=1}^{T} \sum_{(\mathbf{s},\mathbf{a},\bar{\mathbf{a}}} \|\Sigma_{t-1}(\mathbf{s},\mathbf{a},\mathbf{\bar{a}})\|_{2}^{2}
\end{equation}
$\Gamma_T$ is known as the information gain 
for the Gaussian process (GP) a kernel-dependent quantity introduced by \cite{srinivas2009gaussian}, that is widely used to characterize the complexity of learning GP models. Sublinear upper bounds on $\Gamma_T$ are known for commonly used kernels, such as  squared-exponential kernels, as well as their compositions (e.g., additive kernels). We use these results to express $\beta_T$ and to upper bound $\Gamma_T$ in Theorem \ref{thm:1}.  The RH-UCRL also expressesed their theoretical guarantees in terms of $\Gamma_T$.

Now we are ready to state the main results of this section\footnote{Lemmas and related proofs in \cite{ganguly2026}}
\begin{theorem}
\label{thm:1}
    Under Assumptions \ref{assume1} to \ref{assume3}, let $C=\left( \left(1+L_{f} + 2L_{\sigma} \right)\sqrt{1 + L_\pi^2 + L_{\bar{\pi}}^2}\right)$, $\lambda = T^{\kappa}$ s.t $\kappa \in (0,1/2)$, $s_{h,t} \in \mathcal{S}$, $a_{h,t} \in \mathcal{A}$ and $\bar{a}_{h,t} \in \mathcal{\bar{A}}$ for all $t,h > 0$. Then for a fixed $H \geq 1$, with probability at least $1-\delta$, the robust cumulative regret of RHC-UCRL is upper bounded by:
    \begin{equation}
        R_{T} = O\left(L_{(r,\lambda,u)}\beta_{T}^{H}C^{H}H^{1.5}\sqrt{T\Gamma_{T}}\right)
    \end{equation}
    and with probability at least $1-\delta$, the robust cumulative violation of RHC-UCRL is upper bounded by
    \begin{equation}
        V_{T} = O\left(L_{u} \beta_{T}^{H}C^{H}(1+\alpha)H^{1.5}\sqrt{T\Gamma_{T}}\right)
    \end{equation}
\end{theorem}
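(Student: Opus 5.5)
The plan follows the RH-UCRL analysis of \cite{curi2021combining}, with the rectified penalty layered on top. Throughout I work on the event of Assumption~\ref{assume2}, which holds with probability at least $1-\delta$, so that $f\in\mathcal{M}_t$ for all $t$. On this event, for every pair $(\pi,\bar\pi)$ and $g\in\{r,u\}$ we have the sandwich
\[
J^{(p)}_{g_t}(\pi,\bar\pi)\le J_g(f,\pi,\bar\pi)\le J^{(o)}_{g_t}(\pi,\bar\pi).
\]
Since $x\mapsto -\lambda[b-x]_+$ is nondecreasing, the penalized objective $L_\lambda(f,\pi,\bar\pi):=J_r-\lambda[b-J_u]_+$ inherits the same ordering between its pessimistic, true and optimistic versions.

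\textbf{Step 1: reducing to an optimistic–pessimistic gap.} Write $\bar\pi_t^\star$ for the true worst-case adversary against $\pi_t$. Line~9 of Algorithm~\ref{algo:rhc_ucrl} and optimism give
\[
\min_{\bar\pi}L_\lambda(f,\pi^*,\bar\pi)\le \min_{\bar\pi}L^{(o)}_{\lambda,t}(\pi^*,\bar\pi)\le \min_{\bar\pi}L^{(o)}_{\lambda,t}(\pi_t,\bar\pi)\le L^{(o)}_{\lambda,t}(\pi_t,\bar\pi_t).
\]
Line~10 and pessimism give
\[
\min_{\bar\pi}L_\lambda(f,\pi_t,\bar\pi)\ge \min_{\bar\pi}L^{(p)}_{\lambda,t}(\pi_t,\bar\pi)=L^{(p)}_{\lambda,t}(\pi_t,\bar\pi_t).
\]
Hence the per-episode penalized regret is at most $L^{(o)}_{\lambda,t}(\pi_t,\bar\pi_t)-L^{(p)}_{\lambda,t}(\pi_t,\bar\pi_t)$. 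Because $[\cdot]_+$ is $1$-Lipschitz, this is at most $\Delta^r_t+\lambda\Delta^u_t$, where $\Delta^g_t:=J^{(o)}_{g_t}-J^{(p)}_{g_t}$ is evaluated at $(\pi_t,\bar\pi_t)$.

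\textbf{Step 2: bounding the gaps by epistemic widths.} The key point is that $(\pi_t,\bar\pi_t)$ is exactly the pair that is executed, so its trajectory feeds the model and the gaps are controlled by observed widths. I would invoke the hallucinated-trajectory lemma of \cite{curi2020efficient,curi2021combining}. Couple the optimistic model, the pessimistic model and the true $f$ through the same noise. Then Lipschitzness of $f,\Sigma,\pi,\bar\pi$ (Assumptions~\ref{assume1} and~\ref{assume3}) gives a recursion for the state deviation of the form
\[
\|\tilde s_{h+1}-s_{h+1}\|\le C\beta_t\|\tilde s_h-s_h\|+2\beta_t\|\Sigma_{t-1}(s_h,a_h,\bar a_h)\|.
\]
Unrolling it yields deviations of order $\beta_t^H C^H\sum_h\|\Sigma_{t-1}(z_{h,t})\|$. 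Multiplying by $L_g$ and summing over the horizon gives
\[
\Delta^g_t\lesssim L_g\beta_t^HC^HH\,\mathbb{E}\Big[\sum_h\|\Sigma_{t-1}(z_{h,t})\|\Big].
\]
Summing over $t$, applying Cauchy–Schwarz over $(t,h)$ (which supplies the extra $\sqrt H$), and using a martingale argument to pass from expectations to realized trajectories, the sum is bounded by $\sqrt{HT\Gamma_T}$. This yields a total of order $L_{(r,\lambda,u)}\beta_T^HC^HH^{1.5}\sqrt{T\Gamma_T}$ with $L_{(r,\lambda,u)}=L_r+\lambda L_u$.

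\textbf{Step 3: transferring to $R_T$ and $V_T$.} This is the main obstacle, because the regret in \eqref{eqn:cumu_reg} is defined through $J_r$ rather than through the penalized objective. For the violation I would take $\pi^*$ feasible, so that its penalty vanishes. Then, using $J_r\in[0,H+1]$ and the true worst case $\bar\pi_t^\star$,
\[
\lambda[b-J_u(f,\pi_t,\bar\pi_t^\star)]_+\le H+1+\big(\text{per-step gap}\big).
\]
This gives $V_T\lesssim TH/\lambda+\sum_t(\Delta^r_t/\lambda+\Delta^u_t)$. I would handle the $\bar\pi_t$ versus $\bar\pi_t^\star$ mismatch via the pessimistic bound from Line~10; a constant $\alpha$ of this kind presumably accounts for that slack. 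The choice $\lambda=T^\kappa$ with $\kappa<1/2$ keeps $T/\lambda$ sublinear, and the $\Delta^u$ sum is of the stated order with factor $L_u$. For the reward regret, the penalty term of $\pi_t$ is nonnegative, so it can only help. Thus $\min_{\bar\pi}J_r(\pi^*)-\min_{\bar\pi}J_r(\pi_t)$ is bounded by the penalized regret plus $\lambda$ times the penalty at $\pi_t$, which can be absorbed into the $L_{(r,\lambda,u)}$ constant. I expect reconciling the two distinct minimizing adversaries to require the most care.
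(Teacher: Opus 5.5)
Your regret argument is essentially the paper's (Lemma~\ref{lem:opt_pes_rew_util}). It uses optimism at $\pi^*$ with its penalty vanishing by robust feasibility, the max in Line~9, the min in Line~10 combined with pessimism, and $1$-Lipschitzness of $[\cdot]_+$ to reach $\Delta^r_t+\lambda\Delta^u_t$ at the executed pair. It then applies the hallucinated state-deviation recursion (Lemma~\ref{lem:state_bind}) and Cauchy--Schwarz over $(t,h)$. Your martingale step is a refinement the paper skips; it simply identifies $\sum_t\mathbb{E}[\sum_h\|\sigma_{t-1}\|_2^2]$ with $\Gamma_T$.

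Your violation argument takes a different route. The paper compares $\pi_t$ with $\pi^*$ in the optimistic problem to get $\lambda[b-\min_{\bar\pi}J^{(o)}_{u_t}(\pi_t,\bar\pi)]_+\le cR_{\max}$ (Lemma~\ref{lem:opt_viol_bound}). It then adds only the one-sided optimism gap $J^{(o)}_{u_t}(\pi_t,\bar\pi_t)-J_u(f,\pi_t,\bar\pi_t)$ (Lemma~\ref{lem:robust_viol_bound}). Its $\alpha$ is the ($t$-dependent) ratio of $cR_{\max}/\lambda$ to the width term. It is your $T(H+1)/\lambda$ slack written as a multiplicative factor, not a device for an adversary mismatch.

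You instead read the violation off the true penalized-regret inequality. Because $\min_{\bar\pi}L_\lambda(f,\pi_t,\bar\pi)\le L_\lambda(f,\pi_t,\bar\pi')$ for \emph{every} $\bar\pi'$, you get $\lambda[b-J_u(f,\pi_t,\bar\pi')]_+\le H+1+\Delta^r_t+\lambda\Delta^u_t$ uniformly in $\bar\pi'$. So there is no $\bar\pi_t$ versus $\bar\pi_t^\star$ mismatch to handle. Your route in fact bounds the robust violation $[b-\min_{\bar\pi}J_u(f,\pi_t,\bar\pi)]_+$ of \eqref{eqn:opt}. The paper's route does not reach this, because its optimism gap is only controlled along the executed pair $(\pi_t,\bar\pi_t)$. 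Your route also keeps the $O(HT^{1-\kappa})$ term explicit rather than hiding it in a $t$-dependent $\alpha$. The cost is the two-sided gap $\Delta^u_t$ and an extra $\sum_t\Delta^r_t/\lambda$, neither of which affects the rate.

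Two corrections are needed.

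First, in the regret transfer you say the reward regret is bounded by the penalized regret \emph{plus} $\lambda$ times the penalty at $\pi_t$, absorbed into $L_{(r,\lambda,u)}$. That is wrong as written. By your own violation bound, that penalty is only controlled at level $O(H)$ per episode, so adding it would make the regret linear in $T$. It is also unnecessary. Since $L_\lambda\le J_r$ pointwise, $\min_{\bar\pi}L_\lambda(f,\pi_t,\bar\pi)\le\min_{\bar\pi}J_r(f,\pi_t,\bar\pi)$. With the vanishing penalty of $\pi^*$, the reward regret is then directly at most $\Delta^r_t+\lambda\Delta^u_t$. The factor $\lambda L_u$ comes from Step~1. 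Every comparison is made pointwise in $\bar\pi$ before minimizing, so no reconciliation of distinct minimizing adversaries is needed.

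Second, $T/\lambda=T^{1-\kappa}$ is sublinear because $\kappa>0$. The restriction $\kappa<1/2$ is what keeps the $\lambda L_u\beta_T^HC^HH^{1.5}\sqrt{T\Gamma_T}$ part of the regret sublinear.
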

 The Lemmas and proofs related to the theorem~\ref{thm:1} are provided in the Appendix~\ref{sec:thm1}.
Note that this exponential dependency on $C$ also exists in the unconstrained case \cite{curi2021combining}. If the constraints remain absent, then our regret boils down to the regret in the unconstrained case with $L_{r}$ replacing $L_{r,\lambda,u}$. The dependency on $\Gamma_T$ is also of the same order as in the  unconstrained case.

This regret and violation bound shows that $\mathrm{RHC\text{-}UCRL}$ achieves sublinear robust regret and violation when $\beta_T^H \sqrt{\Gamma_T} = o(\sqrt{T})$. \cite{curi2021combining} provides a concrete example of GP models where this condition holds. Also, for linear case, it reduces to $\log(T)$.  The obtained bound also depends on the Lipschitz constants from Assumption~\ref{assume3}, as well as the episode length $H$, which is assumed to be constant. The dependency of the regret and violation bound on the problem dimension is captured in $\Gamma_T$, while $\beta_T$ also depends on $\log(1/\delta)$.

Note that since we do not use the primal-dual method, our proof techniques are fundamentally different compared to the CMDP literature which mostly uses strong duality argument to prove regret and the violation bound \cite{ghosh2022provably,ding2020natural}. 
\begin{figure*}[h!]
    \centering
    \begin{subfigure}[b]{0.45\textwidth}
        \centering
        \includegraphics[width=\textwidth]{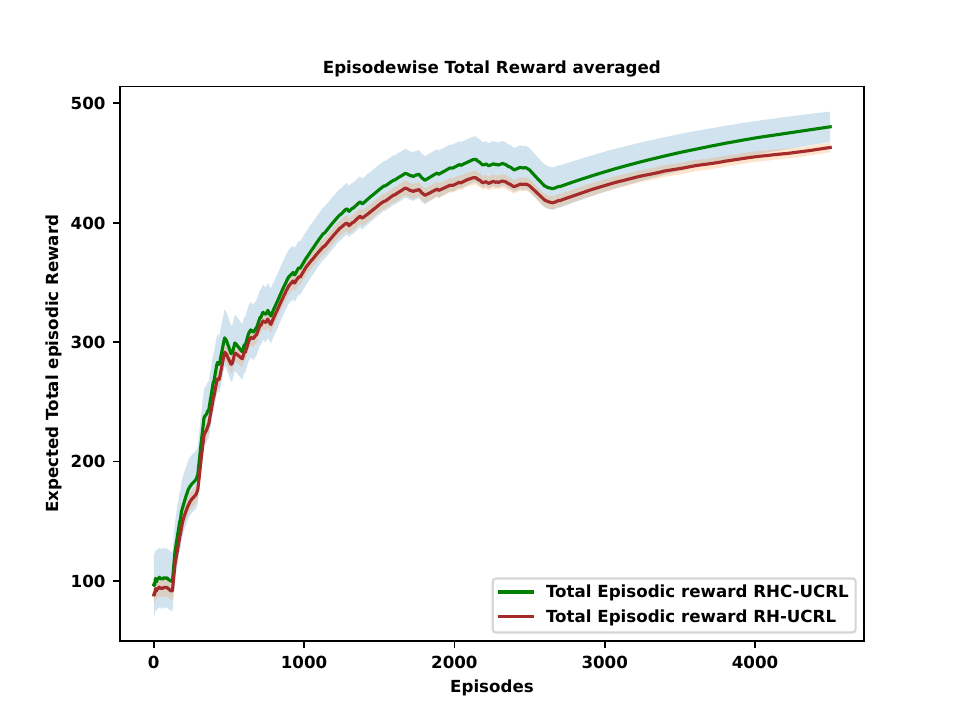}
        \caption{Reward}
        \label{fig:rewardcartpole}
    \end{subfigure}
    \hfill
    \begin{subfigure}[b]{0.45\textwidth}
        \centering
        \includegraphics[width=\textwidth]{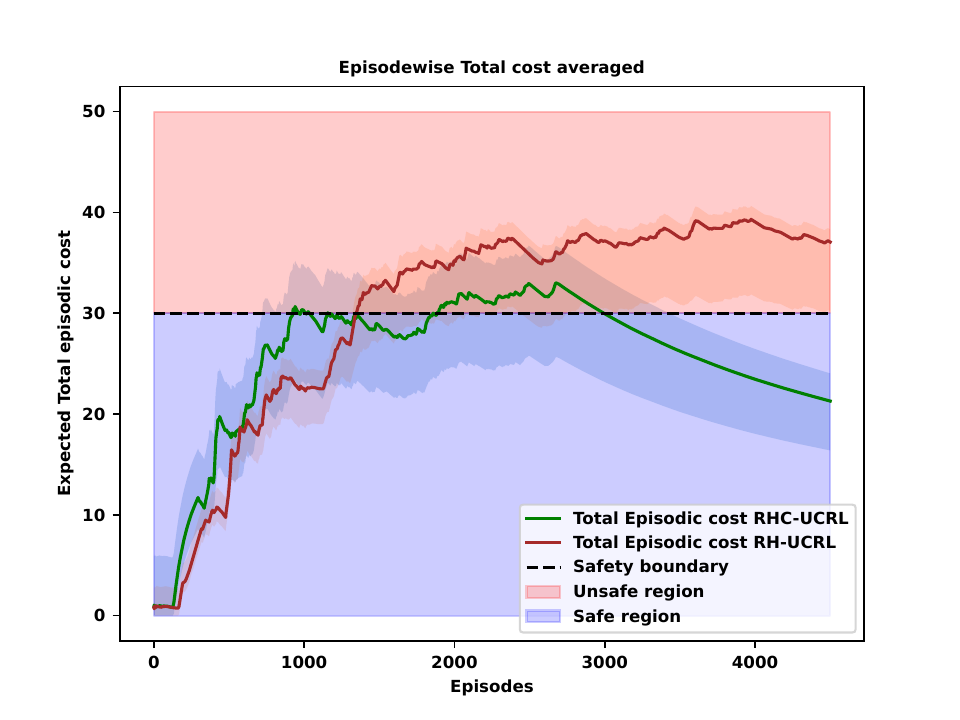}
        \caption{Cost}
        \label{fig:violationcartpole}
    \end{subfigure}
    \caption{Performance of RHC-UCRL and RH-UCRL on the Cartpole-v1 environment.(we use $\lambda=30$)}
    \label{fig:main_results1}
\vspace{-1.3em}
\end{figure*}
\begin{figure*}[h!]
    \centering
    \begin{subfigure}[b]{0.45\textwidth}
        \centering
        \includegraphics[width=\textwidth]{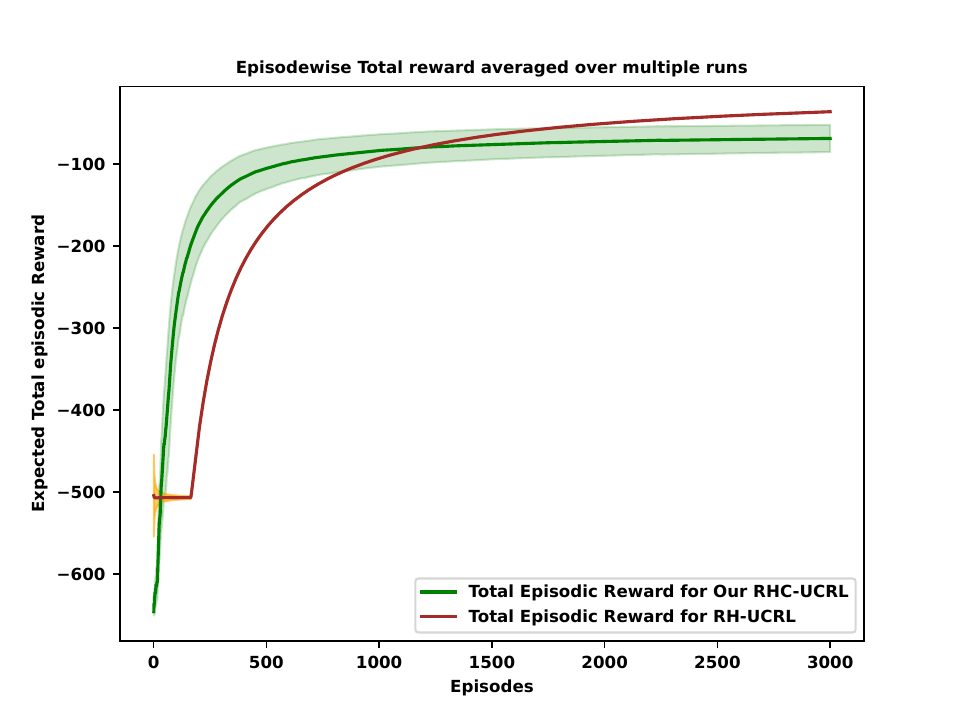}
        \vspace{-0.2in}
        \caption{Reward}
        \label{fig:rewardpendulum}
    \end{subfigure}
    \hfill
    \begin{subfigure}[b]{0.45\textwidth}
        \centering
        \includegraphics[width=\textwidth]{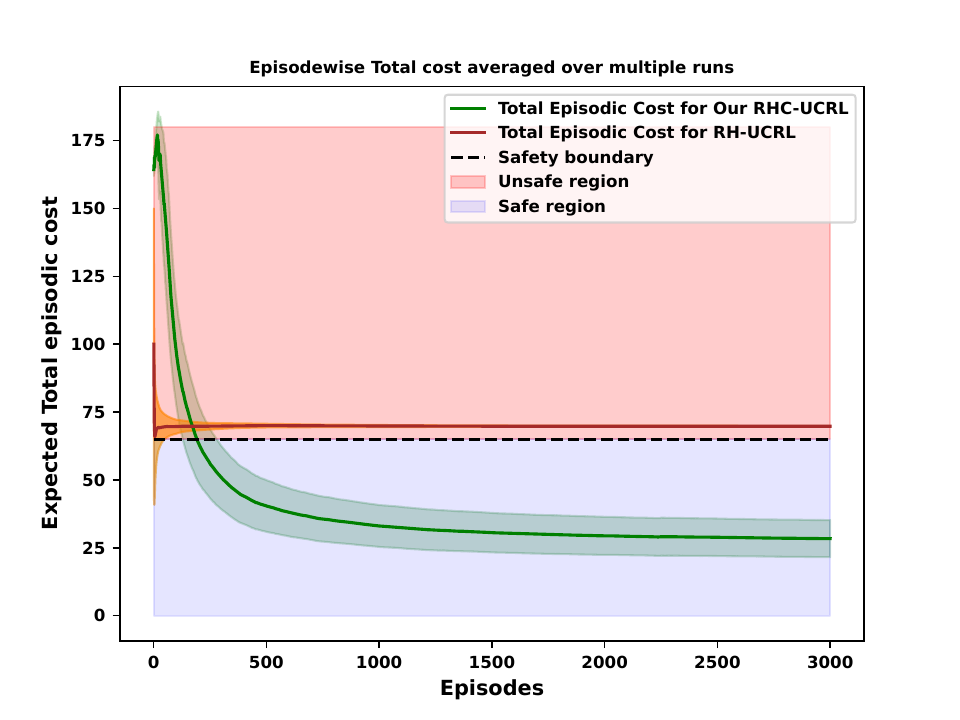}
        \vspace{-0.2in}
        \caption{Cost}
        \label{fig:violationpendulum}
    \end{subfigure}
    \caption{Performance of RHC-UCRL and RH-UCRL on the Pendulum-v1 environment.(we use $\lambda=50$)}
    \label{fig:main_results2}
\vspace{-2em}
\end{figure*}
\vspace{-0.5em}
\section{Experiments}
\label{sec:experiments}
In the following section, we talk about the implementation followed by the empirical results obtained on training our \texttt{RHC-UCRL} algorithm on some standard RL benchamarks. Note during experiments we choose $\lambda$ as a hyperparameter and assume no knowledge about its bounds.
\paragraph{Model Learning} The model is learned (as done by \cite{curi2021combining,curi2020efficient,chua2018deep}) using an ensemble of neural networks, where each member predicts the next state based on the current state and actions of both the agent and adversary. The networks are trained to model state differences using maximum likelihood estimation. The ensemble outputs are combined as a mixture of Gaussians, the average prediction gives the mean next state, while variability across ensemble members captures epistemic uncertainty, and the predicted variances capture aleatoric uncertainty.

\paragraph{Policy Learning} Policies for the agent, adversary, and uncertainty parameters ($\pi,\bar{\pi}$ and $\eta$) are represented using neural networks (as done in \cite{curi2021combining}). The finite-horizon problem is approximated as a discounted infinite-horizon setting, and learning is performed using an actor-critic framework with two critics, an optimistic critic for the agent and a pessimistic critic for the adversary (trained through fitted Q-iteration). Gradients are computed through these critics, with the agent optimizing via gradient ascent and the adversary via gradient descent, enabling a min–max learning procedure. Specifically, we compute the gradients of $\pi$, $\eta^{(o)}$, and $\bar{\pi}_0$ using the learned optimistic critic. The protagonist policy $\pi$ and the optimistic uncertainty parameter $\eta^{(o)}$ are then updated via gradient ascent, while the adversarial policy $\bar{\pi}_0$ is updated via gradient descent. Similarly, using the learned pessimistic critic, we compute the gradients of $\bar{\pi}$ and $\eta^{(p)}$ while keeping $\pi$ fixed. Both the adversarial policy $\bar{\pi}$ and the pessimistic uncertainty parameter $\eta^{(p)}$ are then updated via gradient descent.
\vspace{-0.2em}
\paragraph{Empirical Results} We evaluate the proposed RHC-UCRL algorithm on two benchmark environments, Pendulum-v1 and CartPole-v1, under adversarial perturbations, where at each step an adversary selects perturbations that interact with the agent’s action, capturing the policy–adversary interaction in our formulation. Within this adversarial setting, to assess the performance of RHC-UCRL, we compare it against the baseline: 
RH-UCRL~\cite{curi2021combining}, an unconstrained variant that focuses on reward maximization. Note that we have not compared with \cite{ganguly2025efficient,kitamura2024near,ma2025rectified} as they are for distributionally RCMDP.

\textbf{CartPole:} The CartPole-v1 environment consists of balancing a pole on a moving cart, where the agent selects between two discrete actions. The state includes cart position, cart velocity, pole angle, and pole angular velocity. The reward is obtained by keeping the pole balanced. The adversary perturbs the cart velocity and pole angular velocity before moving the next state. The cost is defined as the absolute distance from the center position. 

\textbf{Pendulum:} The Pendulum-v1 environment is a continuous control task where the agent applies torque to keep the pendulum upright. The state consists of orientation and angular velocity, and the action corresponds to the applied torque. The reward depends on the deviation from the upright position. The adversary perturbs the angle and angular velocity before moving the next state. The cost is defined based on pendulum height: a penalty of 1 is incurred when the height drops below 0.7, and 0 otherwise. 

In both environments, the goal is to maximize cumulative reward subject to a constraint on the cumulative cost.

\textbf{Results:} 
For CartPole-v1, Fig.~\ref{fig:rewardcartpole} shows that RHC-UCRL approaches a reward close to 500 around 4000 episodes, following a steady increasing trend toward the maximum reward. RH-UCRL achieves a reward performance close to RHC-UCRL, following a similar increasing trend, but remains consistently slightly lower throughout training. Fig.~\ref{fig:violationcartpole} shows the cost with safety threshold 30. RHC-UCRL remains below the threshold, while RH-UCRL violates the constraint persistently after approximately 1000 episodes and does not return to the safe region thereafter.

For Pendulum-v1, Fig.~\ref{fig:rewardpendulum} shows that RHC-UCRL performance stabilizes around 1000 episodes. RH-UCRL achieves higher reward than RHC-UCRL but remains in the unsafe domain throughout the evaluation stage, as shown in Fig.~\ref{fig:violationpendulum}. In contrast, RHC-UCRL consistently maintains the cost within the safe region. Overall, RHC-UCRL achieves strong reward performance while ensuring constraint satisfaction across both environments.

\vspace{-0.5em}
\section{Conclusions and Future work}
 In this work, we propose a \texttt{RHC-UCRL} algorithm that considers changes in the environment as an adversary that willfully tries to cripple the learning of the agent. Our theoretical results show that \texttt{RHC-UCRL} achieves a policy with sublinear regret and violation guarantees. Empirically, we demonstrate that \texttt{RHC-UCRL} achieves higher reward while consistently satisfying safety constraints.

 Implementing the algorithm for real-life applications and more simulations in other environments constitutes an important future research direction. Reducing the regret guarantee has been left for the future. Reducing the violation bound to zero while keeping the sub-linear regret also constitutes an important future research direction.

\bibliographystyle{IEEEtran}
\bibliography{biblography}
\onecolumn
\appendix
\sloppy
\section{Proofs}
\begin{lemma}
\label{lem:transition_bind}
    (Adapted from Corollary 1 in \cite{efficient_model_based_rl_2020}) Based on assumptions \ref{assume1} and \ref{assume3}, for every $\mathbf{s,s^{'}} \in \mathcal{S}$ the following ineqlity holds
    \begin{equation}
        \begin{aligned}
            \|f(\mathbf{s},\pi(\mathbf{s}),\bar{\pi}(\mathbf{s})) - f(\mathbf{s}^{'},\pi(\mathbf{s}^{'}),\bar{\pi}(\mathbf{s}^{'})) \| \leq L_{f}\sqrt{1+L_{\pi}^2+L_{\bar{\pi}}^2}\|\mathbf{s}-\mathbf{s}^{'}\|_{2}.
        \end{aligned}
    \end{equation}
\end{lemma}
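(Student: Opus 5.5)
The argument composes the Lipschitz continuity of $f$ (Assumption~\ref{assume1}) with that of the two policies (Assumption~\ref{assume3}). Equip the joint input space $\mathcal{S}\times\mathcal{A}\times\bar{\mathcal{A}}$ with the Euclidean norm of the stacked vector. We read $L_f$ as the Lipschitz constant of $f$ with respect to this norm, so that
\[
\|f(\mathbf{z})-f(\mathbf{z}')\|_2 \le L_f \|\mathbf{z}-\mathbf{z}'\|_2
\]
for $\mathbf{z}=(\mathbf{s},\mathbf{a},\bar{\mathbf{a}})$, $\mathbf{z}'=(\mathbf{s}',\mathbf{a}',\bar{\mathbf{a}}')$.

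Fix $\mathbf{s},\mathbf{s}'\in\mathcal{S}$ and set $\mathbf{z}=(\mathbf{s},\pi(\mathbf{s}),\bar{\pi}(\mathbf{s}))$ and $\mathbf{z}'=(\mathbf{s}',\pi(\mathbf{s}'),\bar{\pi}(\mathbf{s}'))$. The Lipschitz property of $f$ reduces the claim to bounding $\|\mathbf{z}-\mathbf{z}'\|_2$. Because the Euclidean norm of a stacked vector squares additively over its blocks,
\[
\|\mathbf{z}-\mathbf{z}'\|_2^2=\|\mathbf{s}-\mathbf{s}'\|_2^2+\|\pi(\mathbf{s})-\pi(\mathbf{s}')\|_2^2+\|\bar{\pi}(\mathbf{s})-\bar{\pi}(\mathbf{s}')\|_2^2 .
\]
By Assumption~\ref{assume3}, the second block is at most $L_\pi^2\|\mathbf{s}-\mathbf{s}'\|_2^2$ and the third at most $L_{\bar{\pi}}^2\|\mathbf{s}-\mathbf{s}'\|_2^2$. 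Summing the three blocks and taking square roots gives
\[
\|\mathbf{z}-\mathbf{z}'\|_2\le\sqrt{1+L_\pi^2+L_{\bar{\pi}}^2}\,\|\mathbf{s}-\mathbf{s}'\|_2 .
\]
Multiplying by $L_f$ yields the stated inequality.

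No step is really an obstacle; the only point needing care is the choice of norm convention. If $L_f$ were instead defined separately in each argument, a triangle-inequality decomposition would give the weaker factor $1+L_\pi+L_{\bar\pi}$. The $\sqrt{\cdot}$ form in the statement comes from using the joint Euclidean norm, as in Corollary~1 of \cite{efficient_model_based_rl_2020}, so the proof should state that convention explicitly.
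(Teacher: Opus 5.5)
Your proposal is correct and follows the paper's proof essentially line for line: apply the Lipschitz continuity of $f$ with respect to the joint Euclidean norm on $\mathcal{S}\times\mathcal{A}\times\bar{\mathcal{A}}$, split the squared norm over the three blocks, and bound the policy blocks by $L_\pi^2\|\mathbf{s}-\mathbf{s}'\|_2^2$ and $L_{\bar{\pi}}^2\|\mathbf{s}-\mathbf{s}'\|_2^2$. Your explicit remark that $L_f$ must be the Lipschitz constant for the joint Euclidean norm is a useful clarification that the paper leaves implicit.
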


\begin{proof}
\begin{align}
&\|f(\mathbf{s},\pi(\mathbf{s}),\bar{\pi}(\mathbf{s}))
- f(\mathbf{s}',\pi(\mathbf{s}'),\bar{\pi}(\mathbf{s}'))\| \label{eqn:9}\\
&\leq L_f \sqrt{
\|\mathbf{s}-\mathbf{s}'\|_2^2
+\|\pi(\mathbf{s})-\pi(\mathbf{s}')\|_2^2
+\|\bar{\pi}(\mathbf{s})-\bar{\pi}(\mathbf{s}')\|_2^2
} \label{eqn:10}\\
&\leq L_f \sqrt{
\|\mathbf{s}-\mathbf{s}'\|_2^2
+L_\pi^2\|\mathbf{s}-\mathbf{s}'\|_2^2
+L_{\bar{\pi}}^2\|\mathbf{s}-\mathbf{s}'\|_2^2
} \label{eqn:11}\\
&= L_f\sqrt{1+L_\pi^2+L_{\bar{\pi}}^2}\,\|\mathbf{s}-\mathbf{s}'\|_2 . \label{eqn:12}
\end{align}

    The first inequality(eqn. \eqref{eqn:10}) holds due to the Lipschitz continuity of f. The second inequality (eqn. \eqref{eqn:11}) is due to Lipschitz continuity of $\pi$ and $\bar{\pi}$, which we assume in Assumptions \ref{assume1} and \ref{assume3}. This completes the proof.
\end{proof}

\begin{lemma}
\label{lem:reward_bind}
    (Adapted from Lemma 3 in \cite{efficient_model_based_rl_2020}). Assuming the assumptions \ref{assume1} and \ref{assume3} are true, the following inequality holds:
    \begin{equation}
        \begin{aligned}
            |J_{r}(f,\pi,\bar{\pi}) - J_{r}(\tilde{f},\pi,\bar{\pi})| \leq L_{r}\sqrt{1+L_{\pi}^{2}+L_{\bar{\pi}}^2}\sum_{h=0}^{H}\mathbb{E}\left[\|\mathbf{s}_{h}-\tilde{\mathbf{s}}_{h}\|\right]
        \end{aligned}
    \end{equation}
\end{lemma}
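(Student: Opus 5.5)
The plan is to couple the two trajectories and then apply Lipschitz continuity of the reward term by term. Generate the trajectory under $f$ and the one under $\tilde{f}$ with the same noise sequence $\{\boldsymbol{\omega}_h\}$ and the same initial state $\mathbf{s}_0=\tilde{\mathbf{s}}_0$. Both are driven by the same policy pair $(\pi,\bar{\pi})$, so the actions are $\mathbf{a}_h=\pi(\mathbf{s}_h)$, $\bar{\mathbf{a}}_h=\bar{\pi}(\mathbf{s}_h)$ and $\tilde{\mathbf{a}}_h=\pi(\tilde{\mathbf{s}}_h)$, $\tilde{\bar{\mathbf{a}}}_h=\bar{\pi}(\tilde{\mathbf{s}}_h)$. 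The expectations in \eqref{eqn:reward_fn} depend only on the marginal laws of the trajectories, so we may write
\[
J_r(f,\pi,\bar{\pi})-J_r(\tilde f,\pi,\bar{\pi})=\mathbb{E}\Big[\sum_{h=0}^{H}\big(r(\mathbf{s}_h,\pi(\mathbf{s}_h),\bar{\pi}(\mathbf{s}_h))-r(\tilde{\mathbf{s}}_h,\pi(\tilde{\mathbf{s}}_h),\bar{\pi}(\tilde{\mathbf{s}}_h))\big)\Big],
\]
where the expectation is taken over this coupled pair.

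Next, move the absolute value inside the expectation and the sum using Jensen's inequality and the triangle inequality. Each summand is then bounded exactly as in the proof of Lemma \ref{lem:transition_bind}, with $r$ in place of $f$. By Assumption \ref{assume3}, $r$ is $L_r$-Lipschitz in the joint argument $(\mathbf{s},\mathbf{a},\bar{\mathbf{a}})$, which gives
\[
|r(\mathbf{s}_h,\cdot,\cdot)-r(\tilde{\mathbf{s}}_h,\cdot,\cdot)|\le L_r\sqrt{\|\mathbf{s}_h-\tilde{\mathbf{s}}_h\|^2+\|\pi(\mathbf{s}_h)-\pi(\tilde{\mathbf{s}}_h)\|^2+\|\bar{\pi}(\mathbf{s}_h)-\bar{\pi}(\tilde{\mathbf{s}}_h)\|^2}.
\]
Since $\pi$ and $\bar{\pi}$ are $L_\pi$- and $L_{\bar{\pi}}$-Lipschitz, the right-hand side is at most $L_r\sqrt{1+L_\pi^2+L_{\bar{\pi}}^2}\,\|\mathbf{s}_h-\tilde{\mathbf{s}}_h\|$. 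Summing over $h=0,\dots,H$ and taking expectations yields the stated bound; note that the $h=0$ term vanishes.

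The only delicate point is justifying the coupling. Because the noise is additive and i.i.d.\ by Assumption \ref{assume1}, and both systems are started from the same deterministic $\mathbf{s}_0$, feeding the same noise realization into both recursions produces the correct marginal distribution for each trajectory. Hence the difference of the two expectations equals the expectation of the difference on this joint probability space. I would also state explicitly that Lipschitz continuity of $r$ is taken in the Euclidean norm on the concatenated input, so that the square-root form matches the one in Lemma \ref{lem:transition_bind}. No further estimates, such as bounding $\|\mathbf{s}_h-\tilde{\mathbf{s}}_h\|$ itself, are needed here; that step is presumably deferred to a later lemma using Lemma \ref{lem:transition_bind} and the calibration of Assumption \ref{assume2}.
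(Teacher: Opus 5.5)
Your proposal is correct and follows the same route as the paper: write the difference of returns as the expectation of per-step reward differences, move the absolute value inside, and apply the Lipschitz continuity of $r$, $\pi$ and $\bar{\pi}$ exactly as in Lemma~\ref{lem:transition_bind}. Two justifications that the paper leaves implicit are spelled out in your version: the shared-noise coupling that puts $\mathbf{s}_h$ and $\tilde{\mathbf{s}}_h$ on one probability space, and the Jensen/triangle step. This coupling is also the one the paper later relies on in Lemma~\ref{lem:state_bind}.
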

\begin{proof}
\begin{align}
        |J_{r}(f,\pi,\bar{\pi}) - J_{r}(\tilde{f},\pi,\bar{\pi})| = \left|\mathbb{E}\left[\sum_{h=0}^{H}r(\mathbf{s,a,\bar{a}}) - \sum_{h=0}^{H}r(\tilde{\mathbf{s}},\tilde{\mathbf{a}},\tilde{\mathbf{\bar{a}}})\right]\right|\label{eq:14}\\
        =\left|\mathbb{E}\left[\sum_{h=0}^{H}(r(\mathbf{s,a,\bar{a}}) - r(\tilde{\mathbf{s}},\tilde{\mathbf{a}},\tilde{\mathbf{\bar{a}}})\right]\right| \label{eq:15}\\
        =\left|\sum_{h=0}^{H}\mathbb{E}\left[(r(\mathbf{s,a,\bar{a}}) - r(\tilde{\mathbf{s}},\tilde{\mathbf{a}},\tilde{\mathbf{\bar{a}}})\right]\right| \label{eq:16}\\
        \leq L_{r}\sqrt{1+L_{\pi}^2+L_{\bar{\pi}}^2}\sum_{h=0}^{H}\mathbb{E}\left[\|\mathbf{s}_{h} - \tilde{\mathbf{s}}_{h}\|_{2}\right]. \label{eq:17}
\end{align}
The first equality (eq.~\eqref{eq:14}) is through the definition of $J(f,\pi,\bar{\pi})$, the second equality (eq.~\eqref{eq:15}) is by converting the difference of total sum of utility following $f$ and $\tilde{f}$. The third equality (eq.~\eqref{eq:16}) is obtained by converting the difference of the total utility into element wise difference as both runs for $H-1$ steps. The final inequality (eq.~\eqref{eq:17})  is due to Lipschitz property of utility, $\pi$ and $\bar{\pi}$. This completes the proof.
\end{proof}

\begin{lemma}
\label{lem:utility_bind}
    Assuming that assumptions \ref{assume1} and \ref{assume3} holds, the following inequality is true:
    \begin{equation}
        \begin{aligned}
            |J_{u}(f,\pi,\bar{\pi}) - J_{u}(\tilde{f},\pi,\bar{\pi})| \leq L_{u}\sqrt{1+L_{\pi}^{2}+L_{\bar{\pi}}^2}\sum_{h=0}^{H}\mathbb{E}\left[\|\mathbf{s}_{h}-\tilde{\mathbf{s}}_{h}\|\right]
        \end{aligned}
    \end{equation}
\end{lemma}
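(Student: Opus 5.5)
The proof will mirror that of Lemma~\ref{lem:reward_bind} line by line, with the utility $u$ in place of $r$. Fix $(\pi,\bar{\pi})$ and couple the two trajectories $\tau_{f,\pi,\bar{\pi}}$ and $\tau_{\tilde{f},\pi,\bar{\pi}}$ by driving them with the same noise realization $\{\boldsymbol{\omega}_h\}$ and the same initial state $\mathbf{s}_0$. Under this coupling both expectations in $J_u(f,\pi,\bar{\pi})$ and $J_u(\tilde{f},\pi,\bar{\pi})$ can be taken over the single noise sequence, so that
\begin{equation*}
J_u(f,\pi,\bar{\pi}) - J_u(\tilde{f},\pi,\bar{\pi}) = \sum_{h=0}^{H-1}\mathbb{E}\left[u(\mathbf{s}_h,\pi(\mathbf{s}_h),\bar{\pi}(\mathbf{s}_h)) - u(\tilde{\mathbf{s}}_h,\pi(\tilde{\mathbf{s}}_h),\bar{\pi}(\tilde{\mathbf{s}}_h))\right].
\end{equation*}
Here I use linearity of expectation and the fact that both sums have the same number of terms.

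Next I will take absolute values and move them inside the sum and expectation by the triangle inequality and Jensen's inequality. Each summand is then bounded using the Lipschitz continuity of $u$ (Assumption~\ref{assume3}) on the joint argument $(\mathbf{s},\mathbf{a},\bar{\mathbf{a}})$ with the Euclidean norm:
\begin{equation*}
|u(\mathbf{s}_h,\pi(\mathbf{s}_h),\bar{\pi}(\mathbf{s}_h)) - u(\tilde{\mathbf{s}}_h,\pi(\tilde{\mathbf{s}}_h),\bar{\pi}(\tilde{\mathbf{s}}_h))| \leq L_u\sqrt{\|\mathbf{s}_h-\tilde{\mathbf{s}}_h\|^2+\|\pi(\mathbf{s}_h)-\pi(\tilde{\mathbf{s}}_h)\|^2+\|\bar{\pi}(\mathbf{s}_h)-\bar{\pi}(\tilde{\mathbf{s}}_h)\|^2}.
\end{equation*}
Applying the Lipschitz continuity of $\pi$ and $\bar{\pi}$ then gives the bound $L_u\sqrt{1+L_\pi^2+L_{\bar{\pi}}^2}\,\|\mathbf{s}_h-\tilde{\mathbf{s}}_h\|_2$, exactly as in the passage from \eqref{eqn:10} to \eqref{eqn:12} in Lemma~\ref{lem:transition_bind}.

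Summing over $h=0,\dots,H-1$ yields the claimed bound. Since every term is nonnegative, extending the upper index to $H$, as in the statement, only weakens the inequality. This keeps the statement uniform with Lemma~\ref{lem:reward_bind}, where the reward sum runs to $H$.

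The only step needing care is the coupling: the identity in the first display is legitimate only because both trajectories are driven by the same i.i.d.\ noise (Assumption~\ref{assume1}), so the difference of expectations equals the expectation of the pathwise difference. Beyond this there is no genuine obstacle. The lemma only relates utility differences to state deviations; controlling $\mathbb{E}\|\mathbf{s}_h-\tilde{\mathbf{s}}_h\|$ itself is deferred to later lemmas via Lemma~\ref{lem:transition_bind} and the calibration Assumption~\ref{assume2}.
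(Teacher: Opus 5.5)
Your proposal is correct and follows the paper's proof essentially line for line. You use linearity of expectation to write the difference of expected utilities as a sum of per-step expected differences, then Lipschitz continuity of $u$ in the joint argument together with that of $\pi$ and $\bar{\pi}$ to obtain the factor $L_u\sqrt{1+L_\pi^2+L_{\bar{\pi}}^2}$. Your explicit common-noise coupling and your handling of the $H-1$ versus $H$ upper index make precise points the paper leaves implicit, though the identity $\mathbb{E}[X]-\mathbb{E}[Y]=\mathbb{E}[X-Y]$ holds under any coupling with the correct marginals; the common noise is what makes $\mathbb{E}\|\mathbf{s}_h-\tilde{\mathbf{s}}_h\|$ controllable later in Lemma~\ref{lem:state_bind}.
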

\begin{proof}
    \begin{align}
        |J_{u}(f,\pi,\bar{\pi}) - J_{u}(\tilde{f},\pi,\bar{\pi})| = \left|\mathbb{E}\left[\sum_{h=0}^{H}u(\mathbf{s,a,\bar{a}}) - \sum_{h=0}^{H}u(\tilde{\mathbf{s}},\tilde{\mathbf{a}},\tilde{\mathbf{\bar{a}}})\right]\right|\label{eq:19}\\
        =\left|\mathbb{E}\left[\sum_{h=0}^{H}(u(\mathbf{s,a,\bar{a}}) - u(\tilde{\mathbf{s}},\tilde{\mathbf{a}},\tilde{\mathbf{\bar{a}}})\right]\right|\label{eq:20}\\
        =\left|\sum_{h=0}^{H}\mathbb{E}\left[(u(\mathbf{s,a,\bar{a}}) - u(\tilde{\mathbf{s}},\tilde{\mathbf{a}},\tilde{\mathbf{\bar{a}}})\right]\right|\label{eq:21}\\
        \leq L_{u}\sqrt{1+L_{\pi}^2+L_{\bar{\pi}}^2}\sum_{h=0}^{H}\mathbb{E}\left[\|\mathbf{s}_{h} - \tilde{\mathbf{s}}_{h}\|_{2}\right]\label{eq:22}
    \end{align}
The first equality (eq.~\eqref{eq:19}) is through the definition of $J_{u}(f,\pi,\bar{\pi})$, the second equality (eq.~\eqref{eq:20}) is by converting the difference of total sum of rewards following $f$ and $\tilde{f}$. The third equality (eq.~\eqref{eq:21}) is obtained by converting the difference of the total rewards into element wise difference as both runs for $H-1$ steps. The final inequality (eq.~\eqref{eq:22}) is due to Lipschitz property of reward, $\pi$ and $\bar{\pi}$. This completes the proof
\end{proof}

\begin{lemma}
    \label{lem:robust_utility_bind}
    Assuming that assumptions \ref{assume1}, \ref{assume2} and \ref{assume3} are true, then from Lemma \ref{lem:utility_bind} following result holds
    \begin{equation}
        \begin{aligned}
            |\min_{\bar{\pi} \in \bar{\Pi}} J_{u}(f,\pi,\bar{\pi}) - \min_{\bar{\pi} \in \bar{\Pi}} J_{u}(\tilde{f},\pi,\bar{\pi})| \leq L_{u}\sqrt{1+L_{\pi}^2+L_{\bar{\pi}}^2}\sum_{h=0}^{H}\mathbb{E}\left[\|\mathbf{s}_{h} - \tilde{\mathbf{s}}_{h}\|_{2}\right]
        \end{aligned}
    \end{equation}
\end{lemma}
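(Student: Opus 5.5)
The plan is to reduce the claim to Lemma \ref{lem:utility_bind} through the elementary fact that a minimum over a common index set is non-expansive. Specifically, if two functions $g_1,g_2$ on $\bar{\Pi}$ satisfy $|g_1(\bar{\pi})-g_2(\bar{\pi})|\le \epsilon(\bar{\pi})\le \bar{\epsilon}$ for every $\bar{\pi}$, then $|\min g_1-\min g_2|\le\bar{\epsilon}$. Take $g_1(\bar{\pi})=J_u(f,\pi,\bar{\pi})$ and $g_2(\bar{\pi})=J_u(\tilde{f},\pi,\bar{\pi})$, with $\pi$ fixed.

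\textbf{Case 1.} Suppose $\min_{\bar{\pi}} J_u(f,\pi,\bar{\pi})\ge \min_{\bar{\pi}} J_u(\tilde{f},\pi,\bar{\pi})$. Let $\tilde{\pi}^\ast$ be a minimizer, or an $\varepsilon$-minimizer if the minimum is not attained, of $J_u(\tilde{f},\pi,\cdot)$. Then
\begin{align*}
0\le \min_{\bar{\pi}} J_u(f,\pi,\bar{\pi})-\min_{\bar{\pi}} J_u(\tilde{f},\pi,\bar{\pi}) \le J_u(f,\pi,\tilde{\pi}^\ast)-J_u(\tilde{f},\pi,\tilde{\pi}^\ast),
\end{align*}
and Lemma \ref{lem:utility_bind} applied at $\bar{\pi}=\tilde{\pi}^\ast$ bounds the right-hand side.

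\textbf{Case 2.} The opposite case is symmetric. Use a minimizer $\bar{\pi}^\ast$ of $J_u(f,\pi,\cdot)$ and again apply Lemma \ref{lem:utility_bind}. Finally, let $\varepsilon\to 0$ if minimizers were only approximate.

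\textbf{Main obstacle.} The difficulty is interpreting the right-hand side. In Lemma \ref{lem:utility_bind}, the term $\sum_h\mathbb{E}\|\mathbf{s}_h-\tilde{\mathbf{s}}_h\|$ refers to trajectories generated by a particular $\bar{\pi}$, while the two cases use different adversaries. To make the statement well-posed, I would read the trajectory term as evaluated at the relevant minimizing adversary. Alternatively, I would replace it by $\sup_{\bar{\pi}\in\bar{\Pi}}$ of the trajectory discrepancy, which dominates both cases simultaneously.

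With the supremum reading, the Lipschitz constant $L_u\sqrt{1+L_\pi^2+L_{\bar{\pi}}^2}$ is uniform over $\bar{\Pi}$ by Assumption \ref{assume3}, so the bound follows at once. This uniform form is also what the regret analysis later needs, where the discrepancy is bounded via Assumption \ref{assume2} uniformly in the adversary.
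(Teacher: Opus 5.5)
Your proposal is correct and follows essentially the same route as the paper. The paper bounds the difference of minima by $\sup_{\bar{\pi}\in\bar{\Pi}}|J_u(f,\pi,\bar{\pi})-J_u(\tilde{f},\pi,\bar{\pi})|$ in one line and then invokes Lemma~\ref{lem:utility_bind}; your two-case argument just unpacks that non-expansiveness step, and your point that the trajectory term on the right-hand side must be read as a supremum over adversaries is a valid clarification the paper leaves implicit.
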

\begin{proof}
        \begin{align}
            |\min_{\bar{\pi} \in \bar{\Pi}} J_{u}(f,\pi,\bar{\pi}) - \min_{\bar{\pi} \in \bar{\Pi}} J_{u}(\tilde{f},\pi,\bar{\pi})| \leq \sup_{\bar{\pi} \in \bar{\Pi}} |J_{u}(f,\pi,\bar{\pi}) - J_{u}(\tilde{f},\pi,\bar{\pi})|\label{eq:23}\\
            \leq L_{u}\sqrt{1+L_{\pi}^2+L_{\bar{\pi}}^2}\sum_{h=0}^{H}\mathbb{E}\left[\|\mathbf{s}_{h} - \tilde{\mathbf{s}}_{h}\|_{2}\right]\label{eq:24}
        \end{align}
    The first inequality (eq.\eqref{eq:23}) is true since the difference between any two functions is always less than the supremum of the difference between the absolute values of the functions \cite{shalev2014understanding}. The second inequality (eq.\eqref{eq:24}) follows from Lemma \ref{lem:utility_bind} and hence, the proof follows.
\end{proof}

\begin{lemma}
\label{lem:state_bind}
    Under assumptions \ref{assume1}, \ref{assume2} and \ref{assume3}, for all episodes $t \geq 1$, any $\eta \in [-1,1], h \in \{0,\ldots,H-1\},\pi \in \Pi~\text{and } \bar{\pi} \in \bar{\Pi}$ the following inequality holds:
    \begin{equation}
    \begin{aligned}
        \|\textbf{s}_{h,t} - \tilde{\textbf{s}}_{h,t}\| \leq 2\beta_{t-1} \left(\left(1+L_{f} + 2\beta_{t-1} L_{\sigma} \right)\sqrt{1 + L_\pi^2 + L_{\bar{\pi}}^2}\right)^{h-1}\\ 
        \sum_{h'=0}^{h-1}\left\|\sigma_{t-1}\big(s_{h',t},\pi_t(s_{h',t}),\bar{\pi}_t(s_{h',t})\big)\right\|_2.
     \end{aligned}
    \end{equation}
\end{lemma}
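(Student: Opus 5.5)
The plan is to compare the true trajectory $\mathbf{s}_{h,t}$, generated by $f$, with the hallucinated trajectory $\tilde{\mathbf{s}}_{h,t}$, generated by $\tilde f = \mu_{t-1} + \beta_{t-1}\eta\,\Sigma_{t-1}$. Both are driven by the same policy pair $(\pi_t,\bar\pi_t)$ and the same noise realization $\boldsymbol{\omega}_{h,t}$. The additive noise then cancels in the difference, and we obtain a deterministic one-step recursion on $e_h := \|\mathbf{s}_{h,t}-\tilde{\mathbf{s}}_{h,t}\|$ with $e_0=0$, since both start at $s_0$. Writing $z_h=(s_{h,t},\pi_t(s_{h,t}),\bar\pi_t(s_{h,t}))$ and $\tilde z_h$ for the hallucinated counterpart, we have
\begin{equation*}
e_{h+1} = \|f(z_h) - \tilde f(\tilde z_h)\| \le \|f(z_h)-\mu_{t-1}(z_h)\| + \|\mu_{t-1}(z_h)-\tilde f(z_h)\| + \|\tilde f(z_h)-\tilde f(\tilde z_h)\|.
\end{equation*}

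By Assumption \ref{assume2} (calibration, so $f\in\mathcal{M}_t$ on the good event), the first term is at most $\beta_{t-1}\|\sigma_{t-1}(z_h)\|$. Since $|\eta|\le 1$, the second term is at most $\beta_{t-1}\|\sigma_{t-1}(z_h)\|$ as well. Together they give the $2\beta_{t-1}\|\sigma_{t-1}(z_h)\|$ term, which is evaluated along the \emph{true} trajectory, exactly as in the statement.

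For the third term I would split $\tilde f=\mu_{t-1}+\beta_{t-1}\eta\Sigma_{t-1}$ and write $\mu_{t-1} = f + (\mu_{t-1}-f)$. The $f$-part is controlled by Lemma \ref{lem:transition_bind}, giving $L_f\sqrt{1+L_\pi^2+L_{\bar\pi}^2}\,e_h$. The $\Sigma_{t-1}$ parts, coming both from the hallucination and from the calibration slack $\mu_{t-1}-f$, are each controlled by Assumption \ref{assume3} ($L_\sigma$-Lipschitzness of $\Sigma_{t-1}$) composed with the policy Lipschitz constants. This yields $2\beta_{t-1}L_\sigma\sqrt{1+L_\pi^2+L_{\bar\pi}^2}\,e_h$. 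The leftover ``$1$'' in the constant $(1+L_f+2\beta_{t-1}L_\sigma)$ absorbs any residual identity/mean-difference contribution, and at worst it only loosens the bound. Writing $\Lambda := (1+L_f+2\beta_{t-1}L_\sigma)\sqrt{1+L_\pi^2+L_{\bar\pi}^2}$, the net recursion is
\begin{equation*}
e_{h+1}\le \Lambda\, e_h + 2\beta_{t-1}\|\sigma_{t-1}(z_h)\|_2 .
\end{equation*}

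Unrolling from $e_0=0$ gives $e_h\le 2\beta_{t-1}\sum_{h'=0}^{h-1}\Lambda^{h-1-h'}\|\sigma_{t-1}(z_{h'})\|_2$. Since $\Lambda\ge 1$, every power $\Lambda^{h-1-h'}$ is at most $\Lambda^{h-1}$, and pulling this out of the sum yields the claimed bound. A formal induction on $h$ packages the argument cleanly.

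The main obstacle is the third term. Bounding $\|\mu_{t-1}(z_h)-\mu_{t-1}(\tilde z_h)\|$ needs Lipschitzness of the mean, which is not directly assumed. The trick is to avoid it by routing through $f$ plus calibration, so that only $L_f$ and $L_\sigma$ appear. One must also keep the $\sigma$ term evaluated at the true states $z_h$ rather than at $\tilde z_h$, which is exactly why I expand around $z_h$ first. I would also make the index convention $\beta_{t-1}$ versus $\beta_t$ consistent with Assumption \ref{assume2}, using monotonicity of $\beta$ if necessary.
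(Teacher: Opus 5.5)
Your proposal has a genuine gap in the third term $\|\tilde f(z_h)-\tilde f(\tilde z_h)\|$. You claim that its ``$\Sigma_{t-1}$ parts'' are each bounded by $\beta_{t-1}L_\sigma\sqrt{1+L_\pi^2+L_{\bar\pi}^2}\,e_h$. Those parts are $\beta_{t-1}(\eta(z_h)\Sigma_{t-1}(z_h)-\eta(\tilde z_h)\Sigma_{t-1}(\tilde z_h))$ and $(\mu_{t-1}-f)(z_h)-(\mu_{t-1}-f)(\tilde z_h)$.

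Lipschitzness of $\Sigma_{t-1}$ does not give this bound. First, $\eta$ is an arbitrary function into $[-1,1]^p$: the lemma is stated for any $\eta$, with no regularity assumed. Second, $\mu_{t-1}-f$ is only known to satisfy the pointwise envelope bound of Assumption \ref{assume2}, and a function trapped under a Lipschitz envelope need not itself be Lipschitz. For instance, if $\eta$ switches from $+1$ to $-1$ across a hyperplane separating $z_h$ from $\tilde z_h$, the hallucination difference is about $2\beta_{t-1}\|\sigma_{t-1}(z_h)\|$ even as $e_h\to 0$.

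So the residual is of order $\beta_{t-1}(\|\sigma_{t-1}(z_h)\|+\|\sigma_{t-1}(\tilde z_h)\|)$, not proportional to $e_h$. It cannot be ``absorbed by the leftover $1$'' in $\Lambda$, because that factor multiplies $e_h$. If you bound your three-term split honestly, you get $e_{h+1}\le (L_f+2\beta_{t-1}L_\sigma)\sqrt{1+L_\pi^2+L_{\bar\pi}^2}\,e_h+6\beta_{t-1}\|\sigma_{t-1}(z_h)\|_2$, with constant $6$ rather than the claimed $2$. The reason is that your first two terms already spend the whole $2\beta_{t-1}\|\sigma_{t-1}(z_h)\|$ budget. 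The third term would then have to be bounded by a multiple of $e_h$ alone, which requires a Lipschitz $\tilde f$ that you do not have.

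The fix is to split at $f(\tilde z_h)$ rather than at $z_h$, which is what the paper does:
\[
e_{h+1}\le \|f(z_h)-f(\tilde z_h)\|+\|f(\tilde z_h)-\tilde f(\tilde z_h)\|\le L_f\sqrt{1+L_\pi^2+L_{\bar\pi}^2}\,e_h+2\beta_{t-1}\|\sigma_{t-1}(\tilde z_h)\|_2 .
\]
The first bound is Lemma \ref{lem:transition_bind}. The second holds because $f$ and $\tilde f$ both lie in $\mathcal{M}_t$ and are compared at the same point. Then move $\sigma_{t-1}$ back to the true trajectory via $\|\sigma_{t-1}(\tilde z_h)\|_2\le \|\sigma_{t-1}(z_h)\|_2+L_\sigma\sqrt{1+L_\pi^2+L_{\bar\pi}^2}\,e_h$. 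This is where $2\beta_{t-1}L_\sigma$ legitimately enters the growth factor.

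This route needs Lipschitzness only of $f$, $\sigma_{t-1}$ and the policies, never of $\mu_{t-1}$, $\eta$ or $\tilde f$. The rest of your argument matches the paper: the common noise cancels, $e_0=0$, you unroll the recursion, and you bound $\Lambda^{h-1-h'}\le\Lambda^{h-1}$ using $\Lambda\ge 1$.
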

\begin{proof}
    To avoid notational complexity, we use $L_{f,\pi} = L_{f}\sqrt{1+L_{\pi}^{2}+L_{\bar{\pi}}^2}$ and $L_{\sigma,\pi} =L_{\sigma}\sqrt{1+L_{\pi}^{2}+L_{\bar{\pi}}^2} $. 

    We prove by induction that
    \begin{equation}
    \label{eqn:ind_hyp}
        \|s_{h,t} - \tilde{s}_{h,t}\|_{2} \leq 2.\beta_{t-1} \sum_{h^{'}=0}^{h-1} (L_{f,\pi} + 2.\beta_{t-1}L_{\sigma,\pi})^{h-1-h^{'}}.\|\sigma_{t-1}^{\pi_{t},\bar{\pi}_{t}}(\mathbf{s}_{h^{'},t})\|
    \end{equation}
    
    For $h=0$, clearly $s_{0,t} = \tilde{s_{0,t}}$. This is because we treat our initial state as fixed.
    We know, $s_{h+1,t} = f(s_{h,t},\pi_{t},\bar{\pi}_{t})$. Thus, we can expand the LHS for $h+1$ as:
    \hspace*{-1.8em}
        \begin{align}
            \|\mathbf{s}_{h+1,t} - \tilde{\mathbf{s}}_{h+1,t}\| = \|f(\mathbf{s}_{h,t},\pi_{t},\bar{\pi}_{t}) - \tilde{f}(\tilde{\mathbf{s}}_{h,t},\pi_{t},\bar{\pi}_{t})\|\label{eq:27}\\
            = \|f(\mathbf{s}_{h,t},\pi_{t},\bar{\pi}_{t}) - \tilde{f}(\tilde{\mathbf{s}}_{h,t},\pi_{t},\bar{\pi}_{t})+f(\tilde{\mathbf{s}}_{h,t},\pi_{t},\bar{\pi}_{t}) - f(\tilde{\mathbf{s}}_{h,t},\pi_{t},\bar{\pi}_{t})\|\label{eq:28}\\
            \leq \|f(\mathbf{s}_{h,t},\pi_{t},\bar{\pi}_{t}) - f(\tilde{\mathbf{s}}_{h,t},\pi_{t},\bar{\pi}_{t}) \|+\|f(\tilde{\mathbf{s}}_{h,t},\pi_{t},\bar{\pi}_{t}) - \tilde{f}(\tilde{\mathbf{s}}_{h,t},\pi_{t},\bar{\pi}_{t}) \|\label{eq:29}\\
            \leq L_{f}\sqrt{1+L_{\pi}^{2}+L_{\bar{\pi}}^2}\|\mathbf{\mathbf{s}}-\mathbf{\mathbf{\tilde{s}}}\|_{2} + 2 \beta_{t-1}\|\sigma_{t-1}^{\pi_{t},\bar{\pi}_{t}}(\tilde{\mathbf{s}}_{h,t})\|_{2}\label{eq:30}\\
            =L_{f,\pi}\|\mathbf{s}_{h,t} - \tilde{\mathbf{s}}_{h,t}\|_{2} + 2\beta_{t-1}\|\sigma_{t-1}^{\pi_{t},\bar{\pi}_{t}}(\tilde{\mathbf{s}}_{h,t})+\sigma_{t-1}^{\pi_{t},\bar{\pi}_{t}}(\mathbf{s}_{h,t})-\sigma_{t-1}^{\pi_{t},\bar{\pi}_{t}}(\mathbf{s}_{h,t})\|_{2}\label{eq:31}\\
            \leq L_{f,\pi}\|\mathbf{s}_{h,t} - \tilde{\mathbf{s}}_{h,t}\|_{2} + 2\beta_{t-1}(\|\sigma_{t-1}^{\pi_{t},\bar{\pi}_{t}}(\tilde{\mathbf{s}}_{h,t})-\sigma_{t-1}^{\pi_{t},\bar{\pi}_{t}}(\mathbf{s}_{h,t})\|_{2} + \|\sigma_{t-1}^{\pi_{t},\bar{\pi}_{t}}(\mathbf{s}_{h,t})\|_{2})\label{eq:32}\\
            \leq  L_{f,\pi}\|\mathbf{s}_{h,t} - \tilde{\mathbf{s}}_{h,t}\|_{2} + 2\beta_{t-1}L_{\sigma}\sqrt{1+L_{\pi}^{2}+L_{\bar{\pi}}^2}\|\mathbf{s}_{h,t} - \tilde{\mathbf{s}}_{h,t}\|_{2}+ 2\beta_{t-1}\|\sigma_{t-1}^{\pi_{t},\bar{\pi}_{t}}(\mathbf{s}_{h,t})\|_{2}\label{eq:33}\\
            =(L_{f,\pi}+2\beta_{t-1}L_{\sigma,\pi})\|\mathbf{s}_{h,t} - \tilde{\mathbf{s}}_{h,t}\|_{2}+ 2\beta_{t-1}\|\sigma_{t-1}^{\pi_{t},\bar{\pi}_{t}}(\mathbf{s}_{h,t})\|_{2}\label{eq:34}\\
            \leq 2\beta_{t-1}\sum_{h'=0}^{(h+1)-1} \left(\left(L_f + 2\beta_{t-1} L_\sigma \right)\sqrt{1 + L_\pi^2 + L_{\bar{\pi}}^2}\right)^{(h+1)-1-h^{'}}\left\|\sigma_{t-1}^{\pi_{t},\bar{\pi}_{t}}(\mathbf{s}_{h,t})\right\|_2\label{eq:35}
        \end{align} 
    Finally, notice that $(L_{f,\pi} +2\beta_{t-1}L_{\sigma,\pi})^{(h-1-h^{'})} < (1+L_{f,\pi}+2\beta_{t-1}L_{\sigma,\pi})^{h-1-h^{'}} \leq (1+L_{f,\pi}+2\beta_{t-1}L_{\sigma,\pi})^{h-1}$ and the result follows when the above inequality is combined with final inequality from equation \eqref{eq:35}.

    The first inequality (eq.~\eqref{eq:29}) is due to triangular inequality, the second inequality (eq.~\eqref{eq:30}) is a direct application of Lemma \ref{lem:transition_bind} and the second term comes from the assumption that both $f(\tilde{\mathbf{s}}_{h,t},\pi_{t},\bar{\pi}_{t}) \in \mathcal{M}$ and $\tilde{f}(\tilde{\mathbf{s}}_{h,t},\pi_{t},\bar{\pi}_{t}) \in \mathcal{M}$. So $\|f(\tilde{\mathbf{s}}_{h,t},\pi_{t},\bar{\pi}_{t}) - \tilde{f}(\tilde{\mathbf{s}}_{h,t},\pi_{t},\bar{\pi}_{t}) \| = \|f(\tilde{\mathbf{s}}_{h,t},\pi_{t},\bar{\pi}_{t})- \mu(\tilde{\mathbf{s}}_{h,t},\pi_{t},\bar{\pi}_{t}) +\mu(\tilde{\mathbf{s}}_{h,t},\pi_{t},\bar{\pi}_{t}) - \tilde{f}(\tilde{\mathbf{s}}_{h,t},\pi_{t},\bar{\pi}_{t})\|$. Now take the triangular inequality to get $\|f(\tilde{\mathbf{s}}_{h,t},\pi_{t},\bar{\pi}_{t}) - \tilde{f}(\tilde{\mathbf{s}}_{h,t},\pi_{t},\bar{\pi}_{t}) \| \leq \|f(\tilde{\mathbf{s}}_{h,t},\pi_{t},\bar{\pi}_{t})- \mu(\tilde{\mathbf{s}}_{h,t},\pi_{t},\bar{\pi}_{t})\| + \|\mu(\tilde{\mathbf{s}}_{h,t},\pi_{t},\bar{\pi}_{t}) - \tilde{f}(\tilde{\mathbf{s}}_{h,t},\pi_{t},\bar{\pi}_{t})\| = 2\beta_{t-1}\sigma_{t-1}(\tilde{\mathbf{s}}_{h,t},\pi_{t},\bar{\pi}_{t})$.

    The third inequality (eq.~\eqref{eq:32}) is by triangular inequality of $\|.\|_{2}$. The fourth inequality (eq.~\eqref{eq:33}) comes from Lipschitz property of $\sigma(.)$, $\pi$ and $\bar{\pi}$. The final inequality (eq.~\eqref{eq:35}) is obtained by replacing inductive hypothesis with equation \eqref{eqn:ind_hyp}
\end{proof}

\begin{lemma}
\label{lem:opt_viol_bound}
 Assuming $\pi_{t}$ is the chosen policy at instant $t$ by Algorithm \ref{algo:rhc_ucrl} and $c > 2$ , the following inequality holds
    \begin{equation}
        [b - \min_{\bar{\pi}}J_{u}^{(o)}(\pi_{t},\bar{\pi})]_{+} \leq cR_{\max}/\lambda
    \end{equation}
\end{lemma}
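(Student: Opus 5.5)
The plan is to compare the surrogate objective at $\pi_t$ with its value at a feasible comparator, using only the optimality of $\pi_t$ in Line 9 of Algorithm~\ref{algo:rhc_ucrl}. Here $R_{\max}$ denotes the largest possible return, so $J_r\in[0,R_{\max}]$ with $R_{\max}\le H+1$. Write the optimistic surrogate as
\[
\Phi_t(\pi,\bar{\pi}) := J_{r_t}^{(o)}(\pi,\bar{\pi}) - \lambda\big[b - J_{u_t}^{(o)}(\pi,\bar{\pi})\big]_{+}.
\]
Line 9 gives $\min_{\bar{\pi}}\Phi_t(\pi_t,\bar{\pi}) \ge \min_{\bar{\pi}}\Phi_t(\pi,\bar{\pi})$ for every $\pi\in\Pi$.

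First I will pick a comparator whose optimistic penalty vanishes. Take a robust-feasible policy $\hat{\pi}$, for instance $\pi^*$ or any Slater-type policy, with $\min_{\bar{\pi}}J_u(f,\hat{\pi},\bar{\pi})\ge b$; this existence is an implicit assumption of problem~\eqref{eqn:opt}. On the high-probability event of Assumption~\ref{assume2}, $f\in\mathcal{M}_t$, so $J_{u_t}^{(o)}(\hat{\pi},\bar{\pi})\ge J_u(f,\hat{\pi},\bar{\pi})\ge b$ for every $\bar{\pi}$. Hence the penalty term of $\hat{\pi}$ is zero, and $\min_{\bar{\pi}}\Phi_t(\hat{\pi},\bar{\pi}) = \min_{\bar{\pi}}J_{r_t}^{(o)}(\hat{\pi},\bar{\pi}) \ge 0$.

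Next I will upper bound the left side. Let $\bar{\pi}^u\in\arg\min_{\bar{\pi}}J_{u_t}^{(o)}(\pi_t,\bar{\pi})$. Then
\[
\min_{\bar{\pi}}\Phi_t(\pi_t,\bar{\pi}) \le \Phi_t(\pi_t,\bar{\pi}^u) \le R_{\max} - \lambda\big[b-\min_{\bar{\pi}}J_{u_t}^{(o)}(\pi_t,\bar{\pi})\big]_{+},
\]
using $J_{r}^{(o)}\le R_{\max}$ and the fact that $[b-x]_+$ is nonincreasing in $x$. Combining with the comparator bound gives
\[
\lambda\big[b-\min_{\bar{\pi}}J_{u_t}^{(o)}(\pi_t,\bar{\pi})\big]_{+} \le R_{\max} - \min_{\bar{\pi}} J_{r_t}^{(o)}(\hat{\pi},\bar{\pi}) \le R_{\max}.
\]
This is already at most $cR_{\max}/\lambda$ for any $c\ge 1$, so certainly for $c>2$. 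The slack in $c$ can absorb approximation error if the max-min in Line 9 is only solved approximately, or a factor of two from bounding $|J^{(o)}_r(\pi_t)-J^{(o)}_r(\hat{\pi})|\le 2R_{\max}$ if rewards are allowed to be signed.

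The main obstacle is justifying that the comparator's optimistic penalty is zero. This requires the calibration event, so the lemma holds with probability $1-\delta$, and it requires existence of a robust-feasible policy in $\Pi$. The second point must either be stated as a standing assumption, or be replaced by the weaker fact that the surrogate optimum is bounded below by $-\lambda\cdot 0$ for some policy. A secondary subtlety is the two different minimizing adversaries for $J_r^{(o)}$ and $J_u^{(o)}$. I handle this by evaluating $\Phi_t$ at the utility-minimizing adversary $\bar{\pi}^u$, which is valid because the min over $\bar{\pi}$ is upper bounded by any fixed choice. This is exactly the decoupling that the rectified penalty permits and the Lagrangian does not.
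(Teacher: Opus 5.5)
Your proof is correct and follows the same route as the paper's: you compare the optimistic rectified-penalty objective at $\pi_t$ (the Line~9 maximizer) with its value at the feasible comparator $\pi^*$, whose optimistic penalty vanishes, and bound the remaining reward terms by $R_{\max}$. Your version is more careful than the paper's one-line chain. It makes explicit that the zero-penalty step needs the calibration event (so the bound holds with probability $1-\delta$) and the existence of a robust-feasible policy, it handles the two different adversaries by evaluating at the utility-minimizing $\bar{\pi}$, and it shows that $c \ge 1$ already suffices.
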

\begin{proof}
    \begin{align}
         &\lambda[b - \min_{\bar{\pi}}J_{u}^{(o)}(\pi_{t},\bar{\pi})]_{+} \leq \min_{\bar{\pi}}J_{r}^{(o)}(\pi_{t},\bar{\pi})+\lambda[b-J_{u}^{(o)}(\pi_{t},\bar{\pi})]_{+}\label{eq:38} \\
         &-\min_{\bar{\pi}}J_{r}^{(o)}(\pi^{*},\bar{\pi}) - \lambda[b-J_{u}^{(o)}(\pi^{*},\bar{\pi})]_{+} +cR_{\max}\nonumber\\
        &\leq cR_{\max}\label{eq:39} 
    \end{align}
    The first inequality (eq.\eqref{eq:38}) is true because $\pi^{*}$, being the optimal policy, returns the highest value and the highest value cannot be more than $R_{max}$. 
    Thus, $[b - \min_{\bar{\pi}}J_{u}^{(o)}(\pi_{t},\bar{\pi})]_{+} \leq cR_{\max}/\lambda$
\end{proof}

\begin{lemma}
\label{lem:robust_viol_bound}
    Assuming assumptions \ref{assume2} holds
    \[
    \alpha = \frac{cR_{\max}}{\lambda.2L_{u}H \beta_{T}^{H}C^{H}\sum_{h^{'}=0}^{H-1}\mathbb{E}\left[ \|\sigma_{t-1}\big(s_{h',t},\pi_t(s_{h',t}),\bar{\pi}_t(s_{h',t})\big)\|_{2}\right]},\]
    $\pi^{*}$ be the optimal policy solving equation \eqref{eqn:opt} and $\pi_{t}$ and $\bar{\pi_{t}}$ be the policies selected by Algorithm \ref{algo:rhc_ucrl} at instant $t$, then the following inequality holds
    \begin{equation}
        \begin{aligned}
            [b-J_{u}(f,\pi_{t},\bar{\pi}_{t})]_{+} \leq 2L_{u}H \beta_{T}^{H}C^{H}(1+\alpha) \sum_{h^{'}=0}^{H}\mathbb{E}\left[ \|\sigma_{t-1}\big(s_{h',t},\pi_t(s_{h',t}),\bar{\pi}_t(s_{h',t})\big)\|_{2}\right].
        \end{aligned}
    \end{equation}
\end{lemma}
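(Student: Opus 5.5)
The plan is to split the true violation $[b-J_u(f,\pi_t,\bar\pi_t)]_+$ into two parts. The first is the violation of the optimistic robust utility, which Lemma~\ref{lem:opt_viol_bound} already controls. The second is the gap between the optimistic and the true utility along the played pair $(\pi_t,\bar\pi_t)$, which the Lipschitz/state-deviation lemmas control.

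\textbf{Step 1 (decomposition).} I will use that $x\mapsto[b-x]_+$ is nonincreasing and $1$-Lipschitz, so $[b-y]_+\le [b-x]_+ + [x-y]_+$. With $x=\min_{\bar\pi}J_u^{(o)}(\pi_t,\bar\pi)$ and $y=J_u(f,\pi_t,\bar\pi_t)$ this gives
\[
[b-J_u(f,\pi_t,\bar\pi_t)]_+\le [b-\min_{\bar\pi}J_u^{(o)}(\pi_t,\bar\pi)]_+ + \big|\min_{\bar\pi}J_u^{(o)}(\pi_t,\bar\pi)-J_u(f,\pi_t,\bar\pi_t)\big|.
\]
By Lemma~\ref{lem:opt_viol_bound}, the first term is at most $cR_{\max}/\lambda$.

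\textbf{Step 2 (model gap).} Since $\bar\pi_t$ is feasible in the minimization, $\min_{\bar\pi}J_u^{(o)}(\pi_t,\bar\pi)\le J_u^{(o)}(\pi_t,\bar\pi_t)$. The second term is only needed when it is positive, i.e., when the optimistic robust value exceeds the true value, so I can bound it by $J_u^{(o)}(\pi_t,\bar\pi_t)-J_u(f,\pi_t,\bar\pi_t)$.

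$J_u^{(o)}(\pi_t,\bar\pi_t)=J_u(f^{(o)},\pi_t,\bar\pi_t)$ for some plausible model $f^{(o)}\in\mathcal M_t$. By Assumption~\ref{assume2}, $f\in\mathcal M_t$ with probability $1-\delta$, so Lemma~\ref{lem:utility_bind} bounds this gap by $L_u\sqrt{1+L_\pi^2+L_{\bar\pi}^2}\sum_h\mathbb E\|s_{h,t}-\tilde s_{h,t}\|$. Lemma~\ref{lem:state_bind} then bounds each state deviation by $2\beta_{t-1}(\cdots)^{h-1}\sum_{h'}\|\sigma_{t-1}(\cdot)\|$.

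To obtain the stated form I need three crude bounds. First, $\beta_{t-1}\le\beta_T$ by monotonicity. Second, $(1+L_f+2\beta L_\sigma)\le\beta_T(1+L_f+2L_\sigma)$ when $\beta_T\ge1$, so $\beta_{t-1}(\cdots)^{h-1}\le\beta_T^{H}C^{H}$. Third, summing over $h\le H$ produces the factor $H$. Together these give the term $2L_uH\beta_T^HC^H\sum_{h'}\mathbb E\|\sigma_{t-1}\|$, with the factor $\sqrt{1+L_\pi^2+L_{\bar\pi}^2}$ absorbed into $C$.

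\textbf{Step 3 (matching $\alpha$).} $\alpha$ is defined exactly so that
\[
cR_{\max}/\lambda=\alpha\cdot 2L_uH\beta_T^HC^H\sum_{h'}\mathbb E\|\sigma_{t-1}\|.
\]
Adding the two terms therefore yields the factor $(1+\alpha)$. The sum up to $H-1$ in $\alpha$ is dominated by the sum up to $H$ in the claim.

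\textbf{Main obstacle.} The delicate step is Step 2. Lemma~\ref{lem:state_bind} is stated for trajectories generated under $\pi_t,\bar\pi_t$, with $\sigma$ evaluated on the real trajectory. I must make sure the optimistic model's hallucinated trajectory uses the same policy pair, which holds here after passing to $\bar\pi_t$, and that the expectation is over the true rollout. If the optimistic minimizer differs from $\bar\pi_t$, the reduction via feasibility of $\bar\pi_t$ handles it, but only in the direction stated above. I expect that direction to suffice because only the positive part matters.
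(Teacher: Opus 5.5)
Your proposal is correct and follows the paper's proof essentially step for step. Both add and subtract $\min_{\bar\pi}J_u^{(o)}(\pi_t,\bar\pi)$, bound the first piece by Lemma~\ref{lem:opt_viol_bound}, pass to $\bar\pi_t$, apply Lemmas~\ref{lem:utility_bind} and~\ref{lem:state_bind} with the crude $\beta_T^HC^H$ and $H$ factors, and factor out $(1+\alpha)$. Your passage to $\bar\pi_t$ via monotonicity of $[\cdot]_+$ is actually sounder than the paper's eq.~\eqref{eq:43}, which swaps the minimizer for $\bar\pi_t$ inside an absolute value (not valid in general), so keep $[x-y]_+$ rather than $|x-y|$ in your Step~1 display; your remark that the $\sum_{h'=0}^{H-1}$ in $\alpha$ is dominated by $\sum_{h'=0}^{H}$ also correctly handles an index mismatch the paper glosses over.
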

\begin{proof}
\begin{align}
[b-J_{u}(f,\pi_{t},\bar{\pi}_{t})]_{+}
=[b-\min_{\bar{\pi}}J_{u}^{(o)}(\pi_{t},\bar{\pi})
+\min_{\bar{\pi}}J_{u}^{(o)}(\pi_{t},\bar{\pi})
-J_{u}(f,\pi_{t},\bar{\pi}_{t})]_{+} \label{eq:41}\\
\leq [b-\min_{\bar{\pi}}J_{u}^{(o)}(\pi_{t},\bar{\pi})]_{+}
+ |\min_{\bar{\pi}}J_{u}^{(o)}(\pi_{t},\bar{\pi}_{t})
- J_{u}(f,\pi_{t},\bar{\pi}_{t})| \label{eq:42}\\
\leq [b-\min_{\bar{\pi}}J_{u}^{(o)}(\pi_{t},\bar{\pi})]_{+} + |J_{u}^{(o)}(\pi_{t},\bar{\pi}_{t})
- J_{u}(f,\pi_{t},\bar{\pi}_{t})|\label{eq:43} \\
\leq \frac{cR_{\max}}{\lambda}
+ L_{u}\sqrt{1+L_{\pi}^{2}+L_{\bar{\pi}}^{2}}
\sum_{h=0}^{H}\mathbb{E}\!\left[\|s_{h}-\tilde{s}_{h}\|_{2}\right] \label{eq:44}\\
\leq \frac{cR_{\max}}{\lambda}
+ L_{u,\pi}
\sum_{h=0}^{H}\mathbb{E}\!\Bigg[2\beta_{t-1}
\Big(\left(1+L_{f}+2\beta_{t-1}L_{\sigma}\right) \nonumber\\
\cdot \sqrt{1+L_{\pi}^{2}+L_{\bar{\pi}}^{2}}\Big)^{h-1}
\sum_{h'=0}^{h-1}
\left\|\sigma_{t-1}\big(s_{h',t},
\pi_t(s_{h',t}),
\bar{\pi}_t(s_{h',t})\big)\right\|_{2}
\Bigg]\label{eq:45}\\
\leq \frac{cR_{\max}}{\lambda}
+ 2L_{u}\beta_{T}^{H}C^{H}
\sum_{h=0}^{H}\mathbb{E}\!\left[
\sum_{h'=0}^{h}
\left\|\sigma_{t-1}\big(s_{h',t},
\pi_t(s_{h',t}),
\bar{\pi}_t(s_{h',t})\big)\right\|_{2}
\right] \label{eq:46}\\
\leq \frac{cR_{\max}}{\lambda}
+ 2L_{u}H\beta_{T}^{H}C^{H}
\sum_{h'=0}^{H}\mathbb{E}\!\left[
\left\|\sigma_{t-1}\big(s_{h',t},
\pi_t(s_{h',t}),
\bar{\pi}_t(s_{h',t})\big)\right\|_{2}
\right]\label{eq:47}\\
= 2L_{u}H\beta_{T}^{H}C^{H}
\sum_{h'=0}^{H}\mathbb{E}\!\left[
\left\|\sigma_{t-1}\big(s_{h',t},
\pi_t(s_{h',t}),
\bar{\pi}_t(s_{h',t})\big)\right\|_{2}
\right]\nonumber\\ \left(
\frac{cR_{\max}}
{\lambda \cdot 2L_{u}H\beta_{T}^{H}C^{H}
\sum_{h'=0}^{H}\mathbb{E}\!\left[
\left\|\sigma_{t-1}\big(s_{h',t},
\pi_t(s_{h',t}),
\bar{\pi}_t(s_{h',t})\big)\right\|_{2}
\right]}
+ 1\right)\label{eq:48} \\
= 2L_{u}H\beta_{T}^{H}C^{H}(1+\alpha)
\sum_{h'=0}^{H}\mathbb{E}\!\left[
\left\|\sigma_{t-1}\big(s_{h',t},
\pi_t(s_{h',t}),
\bar{\pi}_t(s_{h',t})\big)\right\|_{2}
\right].\label{eq:49}
 \end{align}
    The first equality (eq.~\eqref{eq:41}) comes from adding and subtracting $\min_{\bar{\pi}}J_{u}^{(o)}(\pi_{t},\bar{\pi}_{t})$. The first inequality (eq.~\eqref{eq:42}) is due to the triangle inequality of $||.||_{\infty} = \max{a_{1},a_{2}}$, here $a_{2}=0$ and then invoking $[a]_{+}=\max(a,0)<|a|$. This inequality is true because $a \leq |a|~\text{and}~0 \leq |a|$. The second inequality(eq.~\eqref{eq:43})  is due to replacement of $\min_{\bar{\pi}} J_{u}^{(o)}(\pi_{t},\bar{\pi})$ with any adversarial policy $\bar{\pi}$. This is true since if we place the variable which returns the minimum functional value with any other variable in the candidate set, while the other function is fixed, the difference between the two values is bound to increase. Now, the third inequality (eq.~\eqref{eq:44}) is a direct application of Lemma \ref{lem:opt_viol_bound} and Lemma \ref{lem:utility_bind}. The fourth inequality (eq.~\eqref{eq:45})  is a direct application of Lemma \ref{lem:state_bind}. The fifth inequality (eq.~\eqref{eq:46}) is by assuming $C^{H}$ which is the highest power of $C$ as $C^{h}$ was increasing term by term as $C^0$ when $h=1$, $C^{1}$ when $h=2$, thus, we replace all these $C^{H}$ with the highest value of $h$ which is $H$ and take that common outside the summation. The sixth inequality (eq.~\eqref{eq:47}) is obtained by taking the outer summation $\sum_{h=0}^{H}$ inside expectation as expectation is a linear operator and then perform merging of the two summations into one by causing change of variables which spit out an additional $(H-h+1)$ term inside the expectation. Now we consider this term $(H-h+1)$ changes as $h$ increases, so we replace each with the maximum value $H$ and take that common. Next equality (eq.~\eqref{eq:49}) step is simply by taking the relevant terms common such that we can replace the second term with $(1+\alpha)$ which follows from the definition of $\alpha$ above at the start of the lemma. 
\end{proof}

\begin{lemma}
    \label{lem:opt_pes_rew_util}
    Let $\pi^{*}$ be the optimal feasible policy and the solution for equation \ref{eqn:opt} and let $\pi_{t}$ and $\bar{\pi}_{t}$ be the polices selected by $RHC-UCRL$ at time $t$. Then under assumption \ref{assume2} and $L_{(r,\lambda,u)} = (L_{r} +\lambda L_{u})$, the following inequality (equation \eqref{opt_pes_rew}) hold with probability at least $1-\delta_{r}$.
    \begin{equation}
    \label{opt_pes_rew}
        \min_{\bar{\pi}} J_{r}(f,\pi^{*},\bar{\pi}) - \min_{\bar{\pi}}J_{r}(f,\pi_{t},\bar{\pi}) \leq 4L_{(r,\lambda,u)}\beta_{T}^{H}C^{H}H  \sum_{h^{'}=0}^{H} \mathbb{E}\left[\|\sigma_{t-1}\big(s_{h',t},\pi_t(s_{h',t}),\bar{\pi}_t(s_{h',t})\big)\|_{2}\right].
    \end{equation}
\end{lemma}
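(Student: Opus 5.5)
The plan is the standard optimism/pessimism sandwich, applied to the rectified-penalty surrogate $\Phi(\pi,\bar\pi) := J_r - \lambda[b-J_u]_+$. Write $\Phi^{(o)}_t$ and $\Phi^{(p)}_t$ for the surrogate built from $J^{(o)}_{g_t}$ and $J^{(p)}_{g_t}$, $g\in\{r,u\}$. Work on the event of Assumption~\ref{assume2}, where $f\in\mathcal{M}_t$ for all $t$; this event has probability at least $1-\delta_r$.

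On that event, optimism gives $J_{g}(f,\pi,\bar\pi)\le J^{(o)}_{g_t}(\pi,\bar\pi)$ for every pair, and pessimism gives $J^{(p)}_{g_t}\le J_g(f,\cdot,\cdot)$. Since $x\mapsto -\lambda[b-x]_+$ is nondecreasing, this yields $\Phi^{(p)}_t\le \Phi(f,\cdot,\cdot)\le \Phi^{(o)}_t$ pointwise.

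First I lower-bound the comparator. Because $\pi^*$ is feasible for \eqref{eqn:opt}, we have $[b-J_u(f,\pi^*,\bar\pi)]_+=0$ for every $\bar\pi$. Hence
\[
\min_{\bar\pi}J_r(f,\pi^*,\bar\pi)=\min_{\bar\pi}\Phi(f,\pi^*,\bar\pi)\le \min_{\bar\pi}\Phi^{(o)}_t(\pi^*,\bar\pi)\le \min_{\bar\pi}\Phi^{(o)}_t(\pi_t,\bar\pi),
\]
where the last step uses line 9 of Algorithm~\ref{algo:rhc_ucrl}.

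Next I upper-bound the learner's term by the trivial bound $\min_{\bar\pi}J_r(f,\pi_t,\bar\pi)\ge\min_{\bar\pi}\Phi(f,\pi_t,\bar\pi)\ge \min_{\bar\pi}\Phi^{(p)}_t(\pi_t,\bar\pi)=\Phi^{(p)}_t(\pi_t,\bar\pi_t)$, using line 10. Combining the two bounds,
\[
\text{regret}_t\le \min_{\bar\pi}\Phi^{(o)}_t(\pi_t,\bar\pi)-\Phi^{(p)}_t(\pi_t,\bar\pi_t)\le \Phi^{(o)}_t(\pi_t,\bar\pi_t)-\Phi^{(p)}_t(\pi_t,\bar\pi_t),
\]
since a minimum is at most its value at $\bar\pi_t$. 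This is the key structural step: both hallucinated evaluations now involve the \emph{same} pair $(\pi_t,\bar\pi_t)$, and that pair is exactly the one executed on the true system.

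I then split the gap. Since $[\cdot]_+$ is $1$-Lipschitz,
\[
\Phi^{(o)}-\Phi^{(p)}\le |J^{(o)}_r-J^{(p)}_r|+\lambda|J^{(o)}_u-J^{(p)}_u|.
\]
I insert $J_g(f,\pi_t,\bar\pi_t)$ in each term and apply Lemmas~\ref{lem:reward_bind} and \ref{lem:utility_bind}. Each hallucinated model $f^{(o)}$ or $f^{(p)}$ lies in $\mathcal{M}_t$, so Lemma~\ref{lem:state_bind} applies to its state deviation from the true trajectory, with $\sigma_{t-1}$ evaluated along the true states $s_{h',t}$. This gives a factor $2$ per model and two models, hence the constant $4$, with $L_r+\lambda L_u=L_{(r,\lambda,u)}$.

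The horizon bookkeeping follows the proof of Lemma~\ref{lem:robust_viol_bound}, steps \eqref{eq:45}--\eqref{eq:47}:
\begin{itemize}
\item bound $\beta_{t-1}\le\beta_T$ using monotonicity of $\beta_t$;
\item absorb the factor $\beta_{t-1}^{h-1}$ coming from $2\beta_{t-1}L_\sigma$ into $\beta_T^H C^H$ (assuming $\beta_T\ge1$), with the $\sqrt{1+L_\pi^2+L_{\bar\pi}^2}$ factor folded into $C$;
\item swap the double sum $\sum_h\sum_{h'<h}$, at the cost of a factor $H$.
\end{itemize}

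The step I expect to be the main obstacle is applying Lemma~\ref{lem:state_bind} cleanly. That lemma bounds the state deviation in terms of $\sigma_{t-1}$ along the true trajectory, and it requires comparing the \emph{true} system with $f^{(o)}$ (respectively $f^{(p)}$) under the common noise sequence and the same policy pair. I would first verify that no comparison across different adversaries is needed; the reduction to a single $\bar\pi_t$ above is designed exactly to guarantee this. I would also check that the optimal $\eta^{(o)}$ and $\eta^{(p)}$ are admissible in $\mathcal{M}_t$, so that the $2\beta_{t-1}\sigma_{t-1}$ one-step bound holds. If a subtlety arises with the mismatch between index $H$ and $H-1$ in $J_r$ versus $J_u$, I would pad the utility sum with a zero term, so that the sum runs to $H$ as in the statement.
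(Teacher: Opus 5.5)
Your proposal is correct and follows essentially the same route as the paper's proof. On the comparator side you use optimism, feasibility of $\pi^*$ and line 9; on the learner side you use pessimism and line 10; you then reduce both hallucinated evaluations to the common pair $(\pi_t,\bar\pi_t)$ and apply Lemmas~\ref{lem:reward_bind}, \ref{lem:utility_bind}, \ref{lem:state_bind} with the $(H-h+1)\le H$ bookkeeping to get $4L_{(r,\lambda,u)}$, exactly as the paper does (your handling of the penalty gap via the $1$-Lipschitz property of $[\cdot]_+$ is equivalent to the paper's use of Proposition~\ref{prop:inf_norm}).
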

\begin{proof}
    Let us consider $\min_{\bar{\pi}} J(f,\pi^{*},\bar{\pi}) - \min_{\bar{\pi}}J(f,\pi_{t},\bar{\pi})$ as the instantaneous robust regret of the selected policy $\pi_{t}$ and the quantity $\min_{\bar{\pi}}J_{u}(f,\pi_{t},\bar{\pi})-b$ as the instantaneous robust violation of the same policy.
    We first determine the instantaneous regret bound, followed by the instantaneous violation bound.
        \begin{align}
            &\min_{\bar{\pi}} J_{r}(f,\pi^{*},\bar{\pi}) - \min_{\bar{\pi}} J_{r}(f,\pi_{t},\bar{\pi}) \leq \min_{\bar{\pi}} J_{r_t}^{(o)}(\pi^{*},\bar{\pi}) - \min_{\bar{\pi}} J(f,\pi_{t},\bar{\pi})\label{eq:51}\\
            &\leq \min_{\bar{\pi}} \left(J_{r_t}^{(o)}(\pi^{*},\bar{\pi})-\lambda[b-J_{u}^{(o)}(\pi^{*},\bar{\pi})]_{+}\right)- \min_{\bar{\pi}} J_{r}(f,\pi_{t},\bar{\pi})\label{eq:52}\\
            &\leq \min_{\bar{\pi}}\left(J_{r_t}^{(o)} (\pi_{t},\bar{\pi})-\lambda[b-J_{u}^{(o)}(\pi_{t},\bar{\pi})]_{+}\right)- \min_{\bar{\pi}} J_{r}(f,\pi_{t},\bar{\pi})\label{eq:53}\\
           &\leq \underbrace{J_{r_t}^{(o)}(\pi_{t},\bar{\pi}_{t})-\lambda[b-J_{u}^{(o)}(\pi_{t},\bar{\pi}_{t{}})]_{+}}_{t1} - \underbrace{\min_{\bar{\pi}} \left(J_{r}(f,\pi_{t},\bar{\pi})-\lambda[b-J_{u}(\pi_{t},\bar{\pi})]_{+}\right)}_{t2}\label{eq:54}\\
            &\leq \underbrace{J_{r_t}^{(o)}(\pi_{t},\bar{\pi}_{t})-\lambda[b-J_{u}^{(o)}(\pi_{t},\bar{\pi}_{t})]_{+}}_{t1} -  \underbrace{\min_{\bar{\pi}} \left(J_{r}^{(p)}(f,\pi_{t},\bar{\pi})-\lambda[b-J_{u}^{(p)}(\pi_{t},\bar{\pi})]_{+}\right)}_{t2}\label{eq:55}\\
            &\leq \underbrace{J_{r_t}^{(o)}(\pi_{t},\bar{\pi}_{t}) -\lambda[b-J_{u}^{(o)}(\pi_{t},\bar{\pi}_{t{}})]_{+}}_{t1} - \underbrace{J_{r}^{(p)}(\pi_{t},\bar{\pi}_{t})+\lambda[b-J_{u}^{(p)}(\pi_{t},\bar{\pi}_{t})]_{+}}_{t2}\label{eq:56}\\
            &\leq \underbrace{|J_{r_t}^{(o)}(\pi_{t},\bar{\pi}_{t}) - J_{r}(f,\pi_{t},\bar{\pi_{t}})|}_{t1}+\underbrace{|J_{r}^{(p)}(\pi_{t},\bar{\pi}) - J_{r}(f,\pi_{t},\bar{\pi}_{t})|}_{t2}+\underbrace{\lambda[J_{u}^{(o)}(\pi_{t},\bar{\pi}_{t})-J_{u}^{(p)}(\pi_{t},\bar{\pi}_{t})]_{+}}_{t3} \label{eq:57}\\
            &\leq \underbrace{|J_{r_t}^{(o)}(\pi_{t},\bar{\pi}_{t}) - J_{r}(f,\pi_{t},\bar{\pi_{t}})|}_{t1}+\underbrace{|J_{r}^{(p)}(\pi_{t},\bar{\pi}) - J_{r}(f,\pi_{t},\bar{\pi_{t}})|}_{t2}+\underbrace{\lambda|J_{u}^{(o)}(\pi_{t},\bar{\pi}_{t})-J_{u}^{(p)}(\pi_{t},\bar{\pi}_{t})|}_{t3}\label{eq:58} \\
            &\leq |J_{r_t}^{(o)}(\pi_{t},\bar{\pi}_{t}) - J_{r}(f,\pi_{t},\bar{\pi_{t}})|+|J_{r}^{(p)}(\pi_{t},\bar{\pi}) - J_{r}(f,\pi_{t},\bar{\pi_{t}})|\nonumber\\
            &+\lambda|J_{u}^{(p)}(\pi_{t},\bar{\pi}_{t})-J_{u}(f,\pi_{t},\bar{\pi_{t}})|+\lambda|J_{u}^{(o)}(\pi_{t},\bar{\pi}_{t})-J_{u}(f,\pi_{t},\bar{\pi_{t}})|\label{eq:60}\\
            &\leq L_{r,\pi} \sum_{h=0}^{H} \mathbb{E}\left[\|\mathbf{s}_{h,t} - \mathbf{s}_{h,t}^{(o)}\|_{2}+\|\mathbf{s}_{h,t} - \mathbf{s}_{h,t}^{(p)}\|_{2}\right]+\lambda L_{u,\pi} \sum_{h=0}^{H} \mathbb{E}\left[\|\mathbf{s}_{h,t} - \mathbf{s}_{h,t}^{(o)}\|_{2}+\|\mathbf{s}_{h,t} - \mathbf{s}_{h,t}^{(p)}\|_{2}\right]\label{proof:opt_pes_rew}
        \end{align}
    The first inequality (eq.~\eqref{eq:51}) comes from converting the the model function $f$ with the optimistic model function $f^{(o)}$ which is sure to return the maximum expected return for $\pi^{*}$. The second inequality (eq.~\eqref{eq:52}) is true as $\pi^{*}$ is the optimal policy so $[b-J_{u}^{(o)}(\pi^{*},\bar{\pi})] \leq 0$ then $\lambda[b-J_{u}^{(o)}(\pi^{*},\bar{\pi})]_{+} = 0$ so subtracting $0$ from the previous inequlaity (eq. \ref{eq:51}) will not affect the outcome. The third inequality (eq.~\eqref{eq:53}) comes by replacing the optimal policy by the protagonistic policy returned at time $t$. The fourth inequality (eq.~\eqref{eq:54}) is true since we remove the minimum $\min_{\bar{\pi}} J^{(o)}_{r}(.,\bar{\pi})$ with any other policy $\bar{\pi}_{t}$ whose return value is certain to be above the one returned by $\min_{\bar{\pi}}(J_{r_{t}}^{(o)}(.,\bar{\pi})+\lambda[b-J_{u}^{(o)}(.,\bar{\pi})]_{+})$. The fifth inequality (eq. \eqref{eq:55}) is obtained by transitioning the model function $f$ in the negative term (t2 in eq. \eqref{eq:54}) with a lower returning pessimistic path's $f^{(p)}$ value function. Now basically for the second term (t2) in the sixth inequality(eq. \eqref{eq:56}), we have our solution $\bar{\pi}_{t}$ returned by \texttt{RHC-UCRL} replacing the term $\bar{\pi}$. Now, in the seventh inequality(eq. \eqref{eq:57}), we combine the different terms, take their absolute values, and impose the inequality $x \leq |x|$ on terms t1 and t2. For term t3 in eq. \eqref{eq:57}, we consider the following inequality ($\max(a,0) - \max(b,0) \leq \max(a-b,0)$) (we prove this in proposition \ref{prop:inf_norm}) and then reducing it to simplest terms by cancelling $b$ . In the eigth inequality (eq. \eqref{eq:58}), we replace the $[.]_{+}$ with $|.|$ which follows the relation $[x]_{+} \leq |x|$\cite{shalev2014understanding}. The ninth inequality (eq. \eqref{eq:60} is an application of triangle inequality and in the tenth inequality(eq. \eqref{proof:opt_pes_rew}) we directly apply Lemma \ref{lem:reward_bind} and Lemma \ref{lem:utility_bind}.
    
    Then upon continuation of equation \eqref{proof:opt_pes_rew}, we use Lemma \ref{lem:state_bind}, it follows that all the terms inside the expectation are bounded in the same way as follows:
    \begin{equation}
    \begin{aligned}
        \|\mathbf{s}_{h,t} - \mathbf{s}_{h,t}^{(o)}\|_{2} \leq 2\beta_{t-1}\left(\left(1 + L_{f} + 2\beta_{t-1} L_{\sigma} \right)\sqrt{1 + L_\pi^2 + L_{\bar{\pi}}^2}\right)^{h}\\\sum_{h'=0}^{h}\left\|\sigma_{t-1}\big(s_{h',t},\pi_t(s_{h',t}),\bar{\pi}_t(s_{h',t})\big)\right\|_2,
    \end{aligned}
    \end{equation}
    as $f^{(o)}$ and $f^{(p)}$ belong to the same plausible models $\mathcal{M}_{t}$. By applying the above inequality twice and denoting $C:=\left( \left(1+L_{f} + 2L_{\sigma} \right)\sqrt{1 + L_\pi^2 + L_{\bar{\pi}}^2}\right)$, we arrive at
        \begin{align}
             \min_{\bar{\pi}} J_{r}(f,\pi^{*},\bar{\pi}) - \min_{\bar{\pi}} J_{r}(f,\pi_{t},\bar{\pi}) \leq 4(L_{r}+\lambda L_{u})\beta_{T}^{H}C^{H}\nonumber\\\sum_{h=0}^{H} \mathbb{E}\left[\sum_{h^{'}=0}^{h}\|\sigma_{t-1}\big(s_{h',t},\pi_t(s_{h',t}),\bar{\pi}_t(s_{h',t})\big)\|_{2}\right],\\
             =4L_{(r,\lambda,u)}\beta_{T}^{H}C^{H}\sum_{h=0}^{H} \mathbb{E}\left[\sum_{h^{'}=0}^{h}\|\sigma_{t-1}\big(s_{h',t},\pi_t(s_{h',t}),\bar{\pi}_t(s_{h',t})\big)\|_{2}\right],\\
             \leq 4L_{(r,\lambda,u)}\beta_{T}^{H}C^{H}H \sum_{h^{'}=0}^{H} \mathbb{E}\left[\|\sigma_{t-1}\big(s_{h',t},\pi_t(s_{h',t}),\bar{\pi}_t(s_{h',t})\big)\|_{2}\right],\label{eq:65}
        \end{align}
    where $t \leq T$ and $1 \leq \beta_{t}$ is non-decreasing in t. The last inequality (eq.\eqref{eq:65}) is obtained by taking the outer summation $\sum_{h=0}^{H}$ inside the expectation as expectation is a linear operator, and then performing the merging of the two summations into one by causing a change of variables, which results in an additional $(H-h+1)$ term inside the expectation. Now we consider the term $(H-h+1)$; as $h$ increases, it changes, so we replace each occurrence with the maximum value $H$ and take that common. 

\end{proof}

\begin{theorem*}
\label{thm_proof}
     Under Assumptions \ref{assume1} to \ref{assume3} and considering
    \[
    C=\left( \left(1+L_{f} + 2L_{\sigma} \right)\sqrt{1 + L_\pi^2 + L_{\bar{\pi}}^2}\right),\] $s_{h,t} \in \mathcal{S}$, $a_{h,t} \in \mathcal{A}$ and $\bar{a}_{h,t} \in \mathcal{\bar{A}}$ for all $t,h > 0$  and $\sigma_{t-1}^{h} = \sigma_{t-1}(s_{h,t},\pi_{t}(s_{h,t}),\bar{\pi}(s_{h,t}))$. Then for a fixed $H \geq 1$, with probability at least $1-\delta_{r}$, the robust cumulative regret of RHC-UCRL is upper bounded by:
   \[
R_T = \mathcal{O}\left( L(r,\lambda,u)\beta_T^H C^H H^{3/2} \sqrt{T \Gamma_T} \right),
\]
and with probability at least $1-\delta_u$,
\[
V_T = \mathcal{O}\left( L_u \beta_T^H C^H (1+\alpha) H^{3/2} \sqrt{T \Gamma_T} \right).
\]
\end{theorem*}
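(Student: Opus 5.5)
The plan is to sum the per-episode bounds of Lemma \ref{lem:opt_pes_rew_util} and Lemma \ref{lem:robust_viol_bound} over $t=1,\dots,T$, and then control the resulting sum of posterior standard deviations along the realized trajectories through $\Gamma_T$. The probability statement is handled by conditioning once on the event of Assumption \ref{assume2}, on which $f,f^{(o)},f^{(p)}\in\mathcal{M}_t$ for all $t$. This event is exactly what Lemmas \ref{lem:state_bind}--\ref{lem:opt_pes_rew_util} use, so with probability at least $1-\delta_r$ (respectively $1-\delta_u$) both per-episode inequalities hold simultaneously for every $t\le T$. Summing the regret inequality gives
\begin{equation*}
R_T \le 4L_{(r,\lambda,u)}\beta_T^H C^H H \sum_{t=1}^T \sum_{h'=0}^{H}\mathbb{E}\left[\|\sigma_{t-1}^{h'}\|_2\right].
\end{equation*}
Summing the violation inequality gives the same structure, with $2L_u(1+\alpha)$ in place of $4L_{(r,\lambda,u)}$. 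Here $\beta_{t-1}\le\beta_T$ because $\beta_t$ is non-decreasing.

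The key step is to bound $S_T:=\sum_{t=1}^T\sum_{h=0}^{H}\mathbb{E}[\|\sigma_{t-1}^{h}\|_2]$. I would first replace the expectations by the realized trajectory values. The difference $\sum_t(\mathbb{E}[X_t]-X_t)$, with $X_t=\sum_h\|\sigma_{t-1}^h\|_2$, is a martingale difference sum with bounded increments, since $\sigma$ is bounded on the compact domain. Azuma--Hoeffding therefore gives an additive $O(H\sqrt{T\log(1/\delta)})$ term, which is dominated by the main term. This costs a union bound that splits $\delta$ between calibration and concentration. Next, Cauchy--Schwarz over the $T(H+1)$ terms gives
\begin{equation*}
\sum_{t}\sum_h\|\sigma_{t-1}^h\|_2\le\sqrt{T(H+1)\sum_t\sum_h\|\sigma_{t-1}^h\|_2^2}.
\end{equation*}
Because $\Sigma_{t-1}$ is evaluated at the observed data points, the inner double sum is at most $\Gamma_T$ by the definition of $\Gamma_T$ as the maximum over datasets. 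Hence $S_T=O(\sqrt{H T\Gamma_T})$, and multiplying by the prefactor $H$ yields the claimed $H^{3/2}\sqrt{T\Gamma_T}$ rates for both $R_T$ and $V_T$.

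The main obstacle is the factor $\alpha$ in the violation bound. In Lemma \ref{lem:robust_viol_bound}, $\alpha$ depends on $t$ through $\sum_{h'}\mathbb{E}\|\sigma_{t-1}^{h'}\|_2$, so it cannot be pulled out of the sum directly. I would instead sum the pre-$\alpha$ form (eq.~\eqref{eq:47}) directly. This gives $V_T\le T\,cR_{\max}/\lambda+2L_uH\beta_T^HC^H S_T$. With $\lambda=T^\kappa$ the first term is $cR_{\max}T^{1-\kappa}$, which is sublinear. I would then read $\alpha$ in the theorem as the global ratio $cR_{\max}T/(\lambda\,2L_uH\beta_T^HC^HS_T)$, so that this first term is absorbed as the factor $(1+\alpha)$. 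I would remark that $\alpha$ is bounded provided $T^{1-\kappa}\lesssim \beta_T^HC^HH^{3/2}\sqrt{T\Gamma_T}$, which holds for $\kappa\ge 1/2$ up to logarithmic factors but otherwise inflates the violation to order $T^{1-\kappa}$. Similarly, for the regret, $L_{(r,\lambda,u)}=L_r+T^\kappa L_u$ carries the $\lambda$ dependence explicitly, so sublinearity needs $T^{\kappa}\beta_T^H\sqrt{\Gamma_T}=o(\sqrt{T})$, and I would state this as the regime where the bound is meaningful.
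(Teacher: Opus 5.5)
Your argument is correct and has the same skeleton as the paper's proof: sum the per-episode bounds of Lemma~\ref{lem:opt_pes_rew_util} and Lemma~\ref{lem:robust_viol_bound}, apply Cauchy--Schwarz, and bound the sum of squared posterior standard deviations by $\Gamma_T$. It differs at three points.

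\emph{Order of operations.} The paper applies Cauchy--Schwarz over episodes first, $R_T\le\sqrt{T\sum_t r_t^2}$. It then uses Jensen to move the square inside the expectation, applies Cauchy--Schwarz over $h'$, and simply labels $\sum_t\mathbb{E}[\sum_{h'}\|\sigma_{t-1}^{h'}\|_2^2]$ as $\Gamma_T$. You sum first and use a single Cauchy--Schwarz over all $T(H+1)$ terms. Both give the same $H^{3/2}\sqrt{T\Gamma_T}$ rate.

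\emph{The Azuma--Hoeffding step.} This is the substantive difference: it supplies what the paper's final identification silently needs. Each expectation is conditional on the data realized through episode $t-1$. But $\Gamma_T$ maximizes over data sequences in which each $\Sigma_{t-1}$ is formed from the earlier points of that same sequence. So a sum of conditional expectations is not covered by the definition of $\Gamma_T$ until you pass to the realized trajectories, as you do. For that step you need $\|\sigma_{t-1}\|_2$ uniformly bounded. Only $\mathcal{A}$ and $\bar{\mathcal{A}}$ are assumed compact, and sub-Gaussian noise makes states unbounded, so justify this by, e.g., the prior-variance bound for a bounded kernel rather than by compactness of the domain.

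\emph{The factor $\alpha$.} The paper pulls the episode-dependent $\alpha$ of Lemma~\ref{lem:robust_viol_bound} out of the sum over $t$ as though it were constant, and after its Cauchy--Schwarz step it is actually $(1+\alpha_t)^2$ that sits inside the sum. Summing the pre-$\alpha$ form \eqref{eq:47}, as you do, repairs this. It is cleaner still to state the result directly as $V_T\le cR_{\max}T^{1-\kappa}+O(L_u\beta_T^HC^HH^{3/2}\sqrt{T\Gamma_T})$. Your global $\alpha$ has the realized $S_T$ in its denominator, so an upper bound on $S_T$ alone does not show that $\alpha$ is bounded, which is what your stated condition would need.

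Either way, your treatment makes explicit what $L_{(r,\lambda,u)}$ and $(1+\alpha)$ hide in the paper's statement. For $\kappa\in(0,1/2)$, up to $\beta_T^H\sqrt{\Gamma_T}$ factors, the regret scales like $T^{1/2+\kappa}$ and the violation like $T^{1-\kappa}$. Both are sublinear, but neither is of order $\sqrt{T}$.
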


\begin{proof}
We first bound the cumulative regret and then the cumulative violation.
\begin{align}
&R_{T}= \sum_{t=1}^{T}
\underbrace{
\min_{\bar{\pi}\in\bar{\Pi}} J(f,\pi^{*},\bar{\pi})
-
\min_{\bar{\pi}\in\bar{\Pi}} J(f,\pi_{t},\bar{\pi})
}_{:=\,r_t} \\
&\leq \sqrt{T\sum_{t=0}^{T} r_t^2} \label{eqn:67}
\end{align}
\begin{align}
&\leq 4L_{(r,\lambda,u)}\beta_{T}^{H}C^{H}H\sqrt{T} 
\sqrt{
\sum_{t=1}^{T}
\left(
\mathbb{E}\!\left[
\sum_{h'=0}^{H}
\left\|
\sigma_{t-1}^{h^'}
\right\|_{2}
\right]
\right)^2
} \label{eqn:68}
\end{align}
\begin{align}
&\leq 4L_{(r,\lambda,u)}\beta_{T}^{H}C^{H}H\sqrt{T} 
\sqrt{
\sum_{t=1}^{T}
\mathbb{E}\!\left(
\left[
\sum_{h'=0}^{H}
\left\|
\sigma_{t-1}^{h^{'}}
\right\|_{2}
\right]^2
\right)
} \label{eqn:69}
\end{align}
\begin{align}
&\leq 4L_{(r,\lambda,u)}\beta_{T}^{H}C^{H}H^{1.5}\sqrt{T}
\sqrt{
\underbrace{
\sum_{t=1}^{T}
\mathbb{E}\!\left(
\left[
\sum_{h'=0}^{H}
\left\|
\sigma_{t-1}^{h^'}
\right\|_{2}^{2}
\right]
\right)
}_{\Gamma_{T}}}. \label{eqn:70}
\end{align}
where inequality \eqref{eqn:67} comes from Cauchy-Schwarz inequality; equation \eqref{eqn:68} is directly taken from our Lemma \ref{lem:opt_pes_rew_util} ; equation \eqref{eqn:69} is true due to application of Jensen's inequality; equation \eqref{eqn:70} is true due to application of Cauchy-Schwarz's inequality to equation \eqref{eqn:69} and then by definition of $\Gamma_{T}$, and the required proof follows

    Similarly, we now derive the same for the cumulative Violation bound:

\begin{align}
&V_{T}= \sum_{t=1}^{T}
\underbrace{
\big(b- J(f,\pi_{t},\bar{\pi}_{t})\big)_{+}
}_{v_{t}} \label{eqn:72}\\
 &\leq \sqrt{T\sum_{t=1}^{T}v_{t}^{2}}\label{eqn:73}\\ %
&\leq 2L_{u}H \beta_{T}^{H}C^{H}(1+\alpha)
\sqrt{T
\sum_{t=1}^{T}
\left(
\sum_{h=0}^{H}
\mathbb{E}\!\left[
\sum_{h'=0}^{H}
\left\|
\sigma_{t-1}^{h^{'}}
\right\|_{2}
\right]
\right)^2
} \label{eqn:74}\\
&\leq 2L_{u}H \beta_{T}^{H}C^{H}(1+\alpha) 
\sqrt{T
\sum_{t=1}^{T}
\mathbb{E}\!\left(
\left[
\sum_{h'=0}^{H}
\left\|
\sigma_{t-1}^{h^'}
\right\|_{2}
\right]^2
\right)
} \label{eqn:75}\\
&\leq 2L_{u}\beta_{T}^{H}C^{H}(1+\alpha)H^{1.5}\sqrt{T}
\sqrt{
\underbrace{
\sum_{t=1}^{T}
\mathbb{E}\!\left(
\left[
\sum_{h'=0}^{H}
\left\|
\sigma_{t-1}^{h^{'}}
\right\|_{2}^{2}
\right]
\right)
}_{\Gamma_{T}}}\label{eqn:76}
\end{align}
    where the first inequality(eq. \eqref{eqn:73}) is true due to the Cauchy-Schwarz inequality; the second inequality (eq. \eqref{eqn:74}) is due to Lemma \ref{lem:robust_viol_bound}, the third inequality (eq. \eqref{eqn:75}) is due to application of Jensen's inequality and the final inequality (eq. \eqref{eqn:76}) is due application of Cauchy-Schwarz's inequality. Finally, we use the definition of $\Gamma_{T}$ and the statement follows. This completes the required proof.
\end{proof}

\begin{prop}
\label{prop:inf_norm}
\[
[a]_{+} - [b]_{+} \leq [a-b]_{+}
\]
\end{prop}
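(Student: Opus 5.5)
The plan is a short case analysis on the signs of $a$ and $b$. It uses only two elementary facts: $[x]_{+} = \max(x,0) \geq x$ and $[x]_{+} \geq 0$ for every real $x$.

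The cleanest route avoids four cases. First we observe $a = (a-b) + b \leq [a-b]_{+} + [b]_{+}$, since $a-b \leq [a-b]_{+}$ and $b \leq [b]_{+}$. Next we observe $0 \leq [a-b]_{+} + [b]_{+}$, since both summands are nonnegative. Taking the maximum of these two inequalities gives
\begin{equation*}
[a]_{+} = \max(a,0) \leq [a-b]_{+} + [b]_{+}.
\end{equation*}
Rearranging yields $[a]_{+} - [b]_{+} \leq [a-b]_{+}$, which is the claim. Equivalently, the statement is subadditivity of $x\mapsto[x]_{+}$ applied to $a = (a-b)+b$. That function is convex and positively homogeneous, hence sublinear.

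As a sanity check, the explicit case split behaves as expected. If $a \leq 0$, the left-hand side equals $-[b]_{+} \leq 0 \leq [a-b]_{+}$. If $a>0$ and $b\leq 0$, the left-hand side equals $a \leq a-b = [a-b]_{+}$. If $a>0$ and $b>0$, the left-hand side equals $a-b \leq [a-b]_{+}$.

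There is no real obstacle here. The only care needed is to cover the sign configurations correctly, and the max-of-two-bounds argument does that automatically. This is exactly the form used for term $t3$ in Lemma~\ref{lem:opt_pes_rew_util}, with $a = b - J_{u}^{(p)}$ and $b$ replaced by $b - J_{u}^{(o)}$, so that $a-b = J_u^{(o)} - J_u^{(p)}$.
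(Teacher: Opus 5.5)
Your proof is correct and is essentially the paper's argument: both come down to subadditivity of $x\mapsto[x]_{+}$ applied to $a=(a-b)+b$, which the paper obtains by writing $[a]_{+}=\|(a,0)\|_{\infty}$ and invoking the triangle inequality. Your direct max-of-two-bounds verification is the more careful version, because for the genuine sup-norm $\|(a,0)\|_{\infty}=|a|\neq[a]_{+}$; the paper's step holds only if $\|\cdot\|_{\infty}$ is read as the coordinatewise maximum $\max_i x_i$, which is subadditive but is not a norm.
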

\begin{proof}
    We know
    \begin{align}
        \|\mathbf{x}\|_{\infty} = \max{\left(\begin{bmatrix}
            x_{1}\\
            x_{2}\\
            x_{3}\\
            \ldots\\
            x_{d}
        \end{bmatrix}\right)}
    \end{align}
Now considering $d=2$ for any $a$ for which, we need to find $[a]_{+}$ let $\mathbf{a} = \begin{bmatrix}
    a\\
    0
\end{bmatrix}$, we have $[a]_{+} = \|\mathbf{a}\|_{\infty}$.

Then, 
\begin{align}
    &[a]_{+} = \|\mathbf{a}\|_{\infty}\\
    &\|\mathbf{a}\|_{\infty} = \|\mathbf{a} - \mathbf{b} + \mathbf{b}\|_{\infty} \leq \|\mathbf{a} - \mathbf{b}\|_{\infty}+\|\mathbf{b}\|_{\infty} \label{eq:triangle_inequality_inf_norm}\\
\end{align}
The inequality above (eq. \eqref{eq:triangle_inequality_inf_norm}) is true due to triangle inequality of infinity norm($\|.\|_{\infty}$). Thus, taking $\|\mathbf{b}\|_{2}$ on the other side, we get
\begin{align}
    \|\mathbf{a}\|_{\infty} - \|\mathbf{b}\|_{\infty} \leq \|\mathbf{a} - \mathbf{b}\|_{\infty}
\end{align}
\end{proof}

\begin{prop}
    \[
    [a]_{+} \leq |a|
    \]
\end{prop}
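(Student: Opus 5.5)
The statement $[a]_{+} \leq |a|$ for real $a$ follows directly from the definition $[a]_{+} = \max(a,0)$. The plan is to show that each of the two arguments of the maximum is bounded by $|a|$. Since the maximum of two numbers that are each at most $|a|$ is itself at most $|a|$, this gives the claim at once.

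For the two bounds, first note that $a \leq |a|$ holds for every real $a$. If $a \geq 0$ then $|a| = a$, and if $a<0$ then $a < 0 < -a = |a|$. Second, $0 \leq |a|$ holds by nonnegativity of the absolute value. Combining these gives $\max(a,0) \leq |a|$. This is the same observation already used in the justification of \eqref{eq:42} and of \eqref{eq:58} in Lemma \ref{lem:opt_pes_rew_util}.

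A case split gives a sharper form. When $a\ge 0$ we have $[a]_{+}=a=|a|$, so equality holds. When $a<0$ we have $[a]_{+}=0<|a|$. Either route works, and I would present the case split because it also records when equality holds.

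There is no real obstacle here. The only care needed is to keep the argument for real scalars, as the statement is used in Lemma \ref{lem:opt_pes_rew_util}. If one wanted a vector version, it would apply coordinatewise.
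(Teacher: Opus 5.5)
Your proof is correct and follows the paper's approach: both write $[a]_{+}=\max(a,0)$ and bound each argument of the maximum by $|a|$, using $a\le |a|$ and $0\le |a|$. The case split you add is a harmless refinement that also shows equality holds exactly when $a\ge 0$.
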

\begin{proof}
    \begin{align}
        [a]_{+} = \max{(a,0)},\\
    \end{align}
    We know that 
    \begin{align}
        a \leq |a|~\text{and}~ 0 \leq |a|
    \end{align}
    Thus combining both we can say $[a]_{+} \leq |a|$. This completes the proof
\end{proof}

\color{black}{}

\end{document}